\documentclass{article}
\usepackage{graphicx} % Required for inserting images
\usepackage{amsmath,amssymb,amsthm,tikz,float,booktabs}
\usepackage{cite}
\newtheorem{proposition}{Proposition}
\newtheorem{assumption}{Assumption}

\begin{document}

\begin{titlepage}
\centering

{\LARGE State-Coupled Volatility in Latent Dynamical Systems:
Recovery Under Partial Observation\par}

\vspace{1cm}

{\large
Imani Beckett\par
The Herbert Wertheim School of Public Health and Human Longevity Science\par
University of California San Diego\par
ibeckett@ucsd.edu\par
}

\vspace{0.5cm}

\end{titlepage}
\begin{abstract}

Latent state-space models are widely used to study partially observed dynamical systems, yet most formulations assume that process variability is independent of latent-state position. In many biological, behavioral, and physiological systems, however, variability may depend systematically on the underlying dynamical state, producing structured stochasticity that is not captured by constant-variance models.

We introduce a state-coupled stochastic volatility framework in which latent process variance depends on displacement from a latent equilibrium. To estimate this relationship under partial observation, we develop a particle expectation-maximization procedure combining bootstrap particle filtering and backward trajectory smoothing. The model includes a coupling parameter, $\gamma$, that quantifies the strength of association between latent-state position and process variability.

A large-scale simulation benchmark evaluated recovery and detection performance across varying coupling strengths, observation noise levels, trajectory lengths, and persistence regimes. The proposed framework consistently reduced recovery bias relative to an observed-state heteroskedastic proxy, with the largest improvements occurring under strong coupling. Recovery performance improved with increasing latent persistence, while detection performance remained competitive across a broad range of conditions and became increasingly advantageous as observation noise increased.

Taken together, the results demonstrate that state-coupled volatility can be identified and estimated under partial observation when latent-state structure is explicitly modeled. The framework provides a practical methodological foundation for studying state-dependent variability and evaluating whether structured stochasticity contributes information about system dynamics beyond that contained in mean-state trajectories alone.

\end{abstract}
\section*{Introduction}

Many biological, neural, and behavioral systems exhibit substantial variability across time, even under relatively stable experimental conditions. Traditional state-space and latent-variable approaches typically represent observations as the combination of an underlying latent process and residual noise,

\[
y_t=x_t+\epsilon_t,
\]

where \(x_t\) denotes the latent state and \(\epsilon_t\) denotes observation error or stochastic variation \cite{kalman1960new,durbin2012time}.

Although residual variation is often treated as nuisance noise, growing evidence suggests that variability may itself exhibit systematic organization associated with underlying system dynamics and information processing \cite{churchland2010stimulus,doiron2016mechanics,waschke2021behavior}. This organization may appear as temporal dependence, state-dependent fluctuation, coordinated residual structure, or regime-specific changes in variability \cite{dinstein2015neural,faisal2008noise}.

We refer to this phenomenon as \emph{state-dependent variability}: variability that contains statistical or dynamical organization beyond independent stochastic noise.

Related work has examined state-dependent variability through latent-variable, heteroskedastic, and uncertainty-aware modeling approaches. However, these approaches generally focus on state estimation, volatility forecasting, or representation learning rather than explicit recovery of latent state--variance relationships.

In this work, we introduce a variance-aware latent dynamical framework that treats residual variability as a potential dynamical feature rather than solely as measurement error. Conceptually,

\[
\epsilon_t=g(x_t,z_t,t)+\eta_t,
\]

where \(g(x_t,z_t,t)\) represents the state-dependent residual organization and \(\eta_t\) represents irreducible stochastic noise.

This formulation separates latent dynamics, state-dependent variability, and irreducible noise. Using simulation experiments, we evaluated whether state-dependent latent variability can be recovered and distinguished from conventional stochastic noise under partial observation. The current objective is parameter recoverability under the assumed model class rather than universal superiority over alternative volatility specifications.
\section*{Background and Related Literature}

The proposed model is based on established work in latent state-space modeling, stochastic volatility, conditional heteroskedasticity, and sequential Monte Carlo inference. Rather than introducing a wholly separate class of models, this work develops an interpretable heteroskedastic state-space formulation in which the transition variance is parameterized as a function of latent-state displacement from equilibrium. This formulation is motivated by state-dependent stochastic processes and modern nonlinear state-space models in which transition uncertainty may vary as a function of latent-state position or latent context \cite{krishnan2015deep,karl2016dvbf}. The main contribution is therefore an uncertainty-aware framework for estimating and calibrating state-dependent latent variability under partial observation.

\subsection*{Latent State-Space Models}

Latent state-space models provide a general framework for representing partially observed dynamical systems. In these models, an unobserved latent process evolves over time and generates noisy observations. Classical linear-Gaussian formulations, including Kalman filtering models, assume tractable transition and observation distributions, whereas nonlinear and non-Gaussian extensions allow greater flexibility through approximate inference methods \cite{kalman1960new,durbin2012time,cappe2005inference}.

Many standard state-space models assume that process noise has constant variance. This assumption is often useful computationally, but may be restrictive when the variability changes systematically with the state of the system. In biological, neural, behavioral, and physiological systems, fluctuations may not only reflect measurement error or unstructured noise; they may also exhibit statistical structure related to underlying system dynamics \cite{churchland2010stimulus,dinstein2015neural,doiron2016mechanics,waschke2021behavior}.

The present work adopts this perspective by retaining an interpretable latent mean dynamic while allowing transition variance to vary with latent displacement from equilibrium.

\subsection*{Stochastic Volatility and Conditional Heteroskedasticity}

Stochastic volatility models and GARCH-type models provide important foundations for modeling time-varying variance. In stochastic volatility models, variance is typically represented as an additional latent process \cite{jacquier1994bayesian,taylor1982financial}. In GARCH-type models, conditional variance depends on previous residuals and historical variance values \cite{engle1982autoregressive,bollerslev1986generalized}. These approaches have been widely used to describe volatility clustering and evolving uncertainty.

The present model is closely related to these traditions but targets a different inferential objective. The goal is not primarily to forecast volatility or recover an independent volatility trajectory. Instead, the objective is to estimate whether latent process variance changes systematically as the latent state moves away from equilibrium.

Specifically, transition variance is parameterized as

\begin{equation}
q_t = \exp\left(\alpha + \gamma |x_{t-1} - \mu|\right),
\end{equation}

where $\gamma$ measures the strength of association between latent displacement and process variability. This parameterization provides a direct interpretation of variance structure in terms of latent dynamical geometry. Also when $\gamma = 0$, the model reduces to a constant-variance latent dynamical system.

\subsection*{Heteroskedastic State-Space and Latent-Variable Models}

The proposed framework may be viewed as a restricted heteroskedastic state-space model with state-dependent transition variance. Heteroskedastic regression and latent-variable models commonly allow residual or process variance to depend on observed covariates \cite{harvey1976estimating}. In the present formulation, however, the variance-driving quantity,

\begin{equation}
z_t = |x_{t-1}-\mu|,
\end{equation}

is not directly observed and must instead be inferred from noisy observations of the latent process.

This creates an estimation challenge because uncertainty in latent-state reconstruction propagates directly into uncertainty about the variance function. A plug-in approach that estimates a single latent trajectory and subsequently fits a variance relationship may overstate confidence or distort coupling estimates. For this reason, the present framework emphasizes trajectory-level uncertainty rather than treating latent-state reconstruction as a completed preprocessing step. This distinction is important because the quantity governing variance is itself latent. Consequently, estimation of the variance function depends on accurate reconstruction of the underlying state trajectory, creating a coupling between state estimation and variance estimation that is absent in conventional heteroskedastic regression models.

\subsection*{Particle Filtering and Smoothing}

Sequential Monte Carlo methods provide flexible tools for inference in nonlinear and non-Gaussian state-space models \cite{gordon1993novel,doucet2001sequential}. Particle filtering approximates filtering distributions using weighted samples, while particle smoothing enables approximation of posterior distributions over complete latent trajectories.

In this work, particle methods are used not merely to estimate latent trajectories but to propagate latent-state uncertainty into estimation of the state-dependent variance relationship. This distinction is important because the same latent states that define system trajectories also determine process variance through the coupling function. The inference problem is therefore jointly about recovering latent dynamics and evaluating whether variability exhibits systematic organization around those dynamics.

Sequential Monte Carlo methods have continued to develop substantially beyond classical bootstrap particle filtering. Modern approaches include particle Markov chain Monte Carlo (PMCMC) methods for joint state and parameter inference \cite{andrieu2010particle}, particle Gibbs samplers with ancestor sampling to mitigate path degeneracy \cite{lindsten2014particle}, SMC$^2$ algorithms that perform sequential inference over both latent states and parameters \cite{chopin2013smc2}, and variational sequential Monte Carlo methods that combine particle-based inference with variational optimization \cite{naesseth2018variational}. These developments have improved parameter uncertainty quantification, computational efficiency, and scalability in increasingly complex state-space models. The present work adopts a bootstrap particle filter with backward-sampling smoothing because it provides a transparent and computationally straightforward estimation procedure sufficient for evaluating recoverability of the proposed state-coupled volatility framework under controlled simulation conditions.

\subsection*{Positioning of the Present Contribution}

The contribution of this work is not that state-dependent variance models are entirely new. Rather, the contribution is a focused and interpretable formulation for studying state-dependent variability in partially observed latent dynamical systems.

The framework combines three components:

\begin{enumerate}
\item an equilibrium-seeking latent dynamical model;
\item a state-coupled transition variance parameterization;
\item particle smoothing and particle EM estimation to propagate latent uncertainty;
\end{enumerate}

Together, these components provide a practical approach for evaluating whether observed variability reflects state-independent stochastic fluctuation or state-dependent latent volatility. 
\section*{State-Coupled Stochastic Volatility Framework}

\subsection*{Latent Dynamical System}

Consider a latent state process $\{x_t\}_{t=1}^{T}$ governed by

\[
x_t
=
\mu
+
\phi(x_{t-1}-\mu)
+
w_t,
\]

where:
\begin{itemize}
    \item $\mu \in \mathbb{R}$ denotes the equilibrium or long-run mean,
    \item $\phi \in (-1,1)$ controls autoregressive persistence,
    \item $w_t$ denotes process noise.
\end{itemize}

Observed measurements are generated through

\[
y_t = x_t + v_t,
\]

with observation noise

\[
v_t \sim \mathcal{N}(0,r),
\]

where $r > 0$ denotes observation variance. \\
\begin{figure}[ht]
\centering

\begin{tikzpicture}[
    node distance=1.8cm,
    every node/.style={font=\small},
    box/.style={
        draw,
        rounded corners,
        minimum width=8cm,
        minimum height=1cm,
        align=center
    },
    arrow/.style={
        ->,
        thick
    }
]

% Nodes
\node[box] (latent)
{
\textbf{Latent State Evolution}

\vspace{0.2cm}

$ x_t = \mu + \phi(x_{t-1}-\mu) + w_t $
};

\node[box, below of=latent] (variance)
{
\textbf{State-Coupled Variance}

\vspace{0.2cm}

$ w_t \sim \mathcal{N}(0,q_t) $

\vspace{0.1cm}

$ \log q_t = \alpha + \gamma |x_{t-1}-\mu| $
};

\node[box, below of=variance] (obs)
{
\textbf{Observation Model}

\vspace{0.2cm}

$ y_t = x_t + v_t $

\vspace{0.1cm}

$ v_t \sim \mathcal{N}(0,r) $
};

% Arrows
\draw[arrow] (latent) -- (variance);
\draw[arrow] (variance) -- (obs);

\end{tikzpicture}

\caption{
Overview of the state-coupled stochastic volatility framework.
The latent process variance depends on displacement from equilibrium through the coupling parameter $\gamma$.
}

\label{fig:scsv_model}

\end{figure}
\subsection*{State-Coupled Process Variance}

Unlike standard latent state-space models assuming constant process variance, we allow the latent process variance to depend on the current latent state.

Specifically,
\[
\log q_t = \alpha + \gamma |x_{t-1}-\mu|
\]
where:
\begin{itemize}
    \item $q_t$ denotes the latent process variance at time $t$,
    \item $\alpha$ controls baseline volatility,
    \item $\gamma$ controls state-coupled volatility strength.
\end{itemize}

The latent innovation process is therefore

\[
w_t \sim \mathcal{N}(0,q_t).
\]

Combining these equations yields the complete latent evolution model:

\[
x_t
=
\mu
+
\phi(x_{t-1}-\mu)
+
\varepsilon_t,
\]

with

\[
\varepsilon_t \sim \mathcal{N}(0,q_t),
\]

and

\[
q_t
=
\exp\left(
\alpha
+
\gamma |x_{t-1}-\mu|
\right).
\]

\subsection*{Interpretation of the Coupling Parameter}

The parameter $\gamma$ determines how latent-state displacement modulates process variability.

\begin{itemize}
    \item $\gamma = 0$ reduces the model to a constant-variance latent autoregressive process.
    \item $\gamma > 0$ implies increased latent variability as the latent state moves farther from equilibrium.
    \item Larger values of $\gamma$ correspond to stronger state-dependent volatility coupling.
\end{itemize}

Thus, the model permits variability structure to dynamically change as a function of latent-state geometry.
\subsection*{Stochastic Assumptions and Innovation Process}
Let $\{x_t\}_{t=1}^T$ denote the latent state process and let
$\{y_t\}_{t=1}^T$ denote the observed process. The model is defined by the
conditional transition equation
\[
x_t \mid x_{t-1} \sim
\mathcal{N}\left(
\mu + \phi(x_{t-1}-\mu), q_t
\right),
\]
where the process variance is state-coupled:
\[
q_t = \exp\left(\alpha + \gamma |x_{t-1}-\mu|\right).
\]

Equivalently, the latent process may be written as
\[
x_t = \mu + \phi(x_{t-1}-\mu) + \epsilon_t,
\]
where the innovation process satisfies
\[
\epsilon_t \mid x_{t-1}
\sim \mathcal{N}(0,q_t).
\]

The observation equation is
\[
y_t = x_t + v_t,
\]
with
\[
v_t \sim \mathcal{N}(0,r),
\qquad r>0.
\]

We assume that, conditional on the latent trajectory, the observation errors
$\{v_t\}$ are independent across time and independent of the latent innovations
$\{\epsilon_t\}$.
\subsection*{Model Assumptions}

The proposed state-coupled stochastic volatility model is studied under the
following assumptions.

\begin{assumption}[Stability]
The autoregressive coefficient satisfies
\[
|\phi| < 1.
\]
This ensures that, in the absence of state-coupled volatility, the latent process
has a stable equilibrium at $\mu$.
\end{assumption}

\begin{assumption}[Positive process variance]
For all $t$,
\[
q_t = \exp\left(\alpha + \gamma |x_{t-1}-\mu|\right) > 0.
\]
Thus the process variance is strictly positive for all latent states.
\end{assumption}

\begin{assumption}[Observation noise]
The observation errors satisfy
\[
v_t \overset{\text{iid}}{\sim} \mathcal{N}(0,r),
\qquad r>0,
\]
and are independent of the latent innovations conditional on the latent state.
\end{assumption}

\begin{assumption}[Conditional innovation structure]
The latent innovations satisfy
\[
\epsilon_t \mid x_{t-1}
\sim \mathcal{N}
\left(
0,
\exp\left(\alpha+\gamma |x_{t-1}-\mu|\right)
\right).
\]
Thus the innovations are conditionally mean-zero but not identically distributed
when $\gamma \neq 0$.
\end{assumption}

\begin{assumption}[Finite parameter space]
The parameters are restricted to compact sets:
\[
\mu \in \mathcal{M}, \qquad
\phi \in (-1,1), \qquad
\alpha \in \mathcal{A}, \qquad
\gamma \in \mathcal{G}, \qquad
r \in \mathcal{R}\subset (0,\infty).
\]
This avoids degenerate variance estimates and supports stable numerical
estimation.
\end{assumption}
\subsection*{Particle EM Estimation}

Inference was implemented using bootstrap particle filtering with backward-sampling smoothing \cite{gordon1993novel}.

At iteration $k$, particles were propagated according to

\[
x_t^{(i)}
\sim
\mathcal N
\left(
\mu+\phi(x_{t-1}^{(i)}-\mu),
q_t^{(i)}
\right),
\]

with

\[
q_t^{(i)}
=
\exp(\alpha+\gamma|x_{t-1}^{(i)}-\mu|).
\]

Particle weights were updated using

\[
y_t\mid x_t^{(i)}
\sim
\mathcal N(x_t^{(i)},r),
\]

followed by systematic resampling.

Backward trajectory sampling generated

\[
x_{1:T}^{(1)},\ldots,x_{1:T}^{(M)}
\sim
p(x_{1:T}\mid y_{1:T}).
\]

\subsubsection*{EM Updates}
\begin{figure} [H]
    \centering
    \includegraphics[width=1\linewidth]{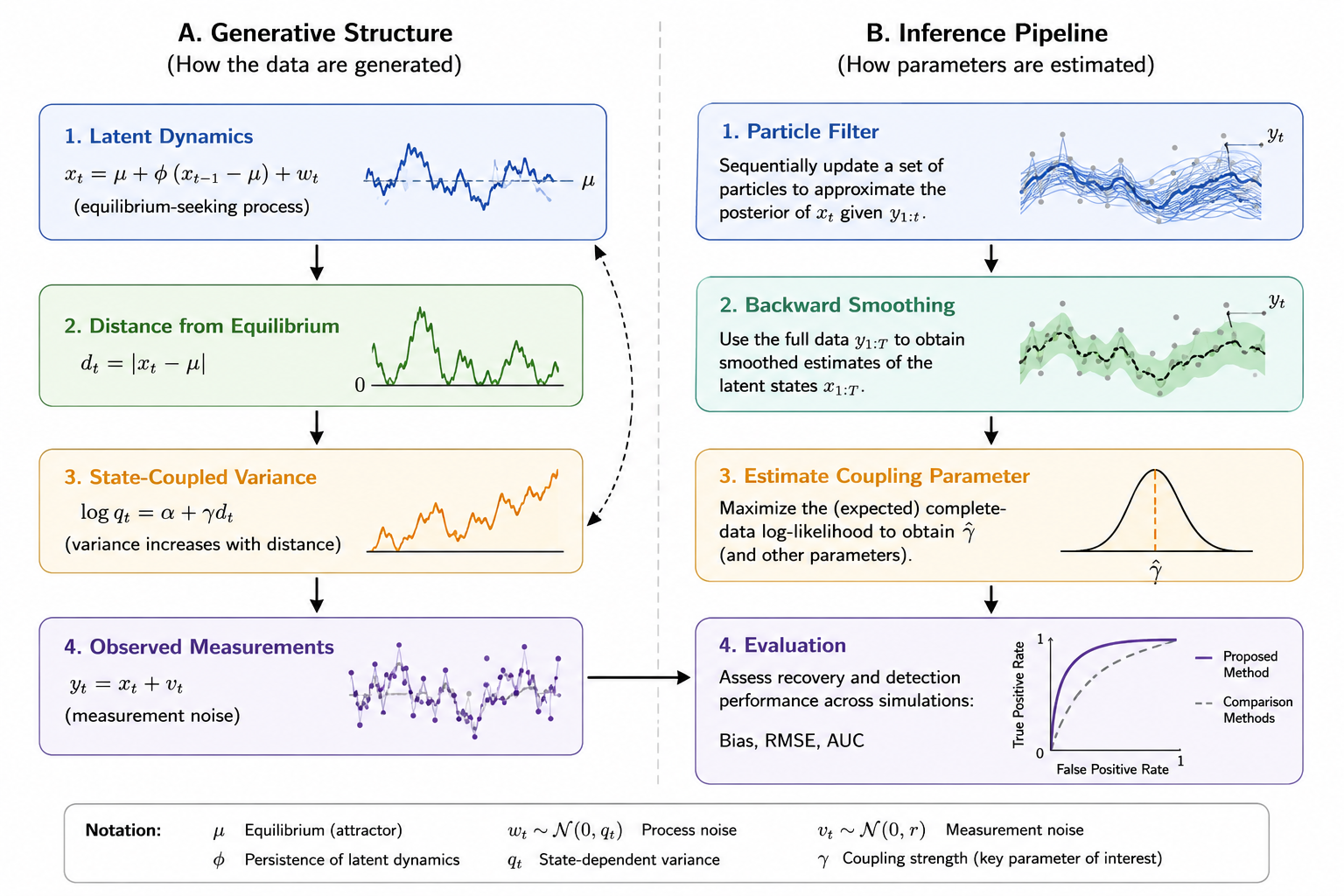}
    \caption{(A) Generative structure. Latent dynamics evolve according to an equilibrium-seeking process, where the latent state $x_t$ fluctuates around an equilibrium $\mu$ with persistence parameter $\phi$. The magnitude of displacement from equilibrium determines the process variance through a state-coupling mechanism, $\log q_t = \alpha + \gamma |x_t-\mu|$, where $\gamma$ quantifies the strength of state-dependent variability. Observed measurements $y_t$ are generated from the latent state under measurement noise. (B) Inference pipeline. Particle filtering and backward smoothing are used to reconstruct latent trajectories from noisy observations. Model parameters, including the coupling parameter $\gamma$, are estimated using a particle expectation-maximization procedure. Recovery and detection performance are then evaluated using simulation benchmarks, including bias, root mean squared error (RMSE), and area under the receiver operating characteristic curve (AUC). The central innovation of the framework is the explicit modeling of variability as a function of latent-state position, allowing structured state-dependent stochasticity to be recovered under partial observation}
    \label{fig:framgraph}
\end{figure}
The E-step approximated the smoothing distribution using sampled trajectories.
In the present simulation benchmark, $\mu$, $\phi$, and r were treated as fixed inputs, and the particle EM procedure estimated only the volatility parameters $\alpha$ and $\gamma$. EM updates were damped by averaging the previous estimate with the current M-step estimate.
The M-step updated volatility parameters

\[
\theta_v=(\alpha,\gamma),
\]

while holding $\mu$, $\phi$, and $r$ fixed.

Residuals and coupling covariates were computed as

\[
e_t^{(m)}
=
x_t^{(m)}
-
\left[
\mu+\phi(x_{t-1}^{(m)}-\mu)
\right],
\]

\[
z_t^{(m)}
=
|x_{t-1}^{(m)}-\mu|.
\]

Parameters were estimated by minimizing

\[
\mathcal L(\alpha,\gamma)
=
\frac12
\sum_{m=1}^{M}
\sum_{t=2}^{T}
\left[
\log q_t^{(m)}
+
\frac{(e_t^{(m)})^2}{q_t^{(m)}}
\right],
\]

where

\[
q_t^{(m)}
=
\exp(\alpha+\gamma z_t^{(m)}).
\]

Optimization used bounded L-BFGS-B:

\[
\alpha\in[-8,4],
\qquad
\gamma\in[-5,5].
\]

\subsubsection*{Implementation}
Monte Carlo EM updates were damped:

\[
\theta_v^{(k+1)}
=
(1-\lambda)\theta_v^{(k)}
+
\lambda\tilde\theta_v^{(k+1)},
\]

with

\[
\lambda=0.5.
\]

Iterations terminated when parameter changes were below

\[
10^{-3},
\]

or after 20 iterations.

Unless otherwise stated, simulations used
$N=3000$ particles and
$M=300$ backward-sampled trajectories (Figure~\ref{fig:framgraph}).
\subsection*{Identifiability Under Observed Latent States}

The following proposition establishes identifiability of the volatility parameters when the latent trajectory is known and the mean-dynamics parameters are fixed. \\
\begin{proposition}[Conditional identifiability]

Consider the latent transition model

\[
x_t
=
\mu+\phi(x_{t-1}-\mu)+\varepsilon_t,
\]

where

\[
\varepsilon_t\mid x_{t-1}
\sim
N(0,q_t),
\]

and

\[
q_t
=
\exp
\{
\alpha+\gamma|x_{t-1}-\mu|
\}.
\]

Define

\[
z_t
=
|x_{t-1}-\mu|.
\]

Assume that $\mu$ and $\phi$ are known, the latent trajectory $x_{1:T}$ is observed, and $z_t$ takes at least two distinct values with positive probability.

Then the volatility parameters $(\alpha,\gamma)$ are identifiable.

\end{proposition}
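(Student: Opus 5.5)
Since $\mu$ and $\phi$ are known and the trajectory $x_{1:T}$ is observed, both the innovation $\varepsilon_t = x_t-\mu-\phi(x_{t-1}-\mu)$ and the covariate $z_t=|x_{t-1}-\mu|$ are observed quantities. The plan is therefore to reduce identifiability of $(\alpha,\gamma)$ to identifiability of the conditional variance function $z\mapsto \exp(\alpha+\gamma z)$ on the support of $z_t$, and then to show that a log-linear function is pinned down by its values at two distinct points. Concretely, I will suppose two parameter pairs $(\alpha,\gamma)$ and $(\alpha',\gamma')$ induce the same joint law of the observed trajectory, and aim to show $(\alpha,\gamma)=(\alpha',\gamma')$.

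\emph{Step 1: equality of conditional laws.} Under either parameter value, the conditional law of $x_t$ given $x_{t-1}$ is $N(\mu+\phi(x_{t-1}-\mu),\,q_t)$. The mean parts coincide because $\mu,\phi$ are fixed. Equality of the joint laws of $(x_{t-1},x_t)$ therefore forces the conditional variances to agree: $\exp(\alpha+\gamma z)=\exp(\alpha'+\gamma' z)$ for almost every $z$ in the support of $z_t$. Equivalently, one can note that $\mathrm{E}[\varepsilon_t^2\mid x_{t-1}]=q_t$ is a functional of the observable law. Taking logs gives $(\alpha-\alpha')+(\gamma-\gamma')z=0$ on that support.

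\emph{Step 2: two-point argument.} By hypothesis, $z_t$ takes at least two distinct values $z^{(1)}\neq z^{(2)}$, each with positive probability. This guarantees the identity holds at both points rather than only on a null set. Subtracting the two equations gives $(\gamma-\gamma')(z^{(1)}-z^{(2)})=0$, hence $\gamma=\gamma'$, and then $\alpha=\alpha'$. This is just the full-rank condition for the design matrix with rows $(1,z_t)$.

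\emph{Main obstacle.} The one delicate point is the passage from equality of laws to pointwise equality of the variance function at the chosen $z$ values. Conditional densities are only defined up to null sets, and the hypothesis ``two distinct values with positive probability'' must be read as giving two points, or two disjoint positive-probability sets of $z$, on which the a.e.\ identity can be evaluated. I would handle this by working with the affine identity almost surely. If it held only on a set where $z_t$ is a.s.\ constant, that would contradict the hypothesis, since an affine function vanishing a.s.\ on a non-degenerate distribution has both coefficients zero. This can be made rigorous by noting $\mathrm{Var}(z_t)>0$ and taking the covariance of the identity with $z_t$.

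As a complementary remark tied to the estimator, the same conclusion follows from the M-step objective $\mathcal L(\alpha,\gamma)$ evaluated on true trajectories. Its population version, $\mathrm{E}[\alpha+\gamma z_t+\varepsilon_t^2 e^{-\alpha-\gamma z_t}]$, is strictly convex whenever $\mathrm{Var}(z_t)>0$, so it has the true parameter as its unique minimizer.
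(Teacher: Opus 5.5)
Your proposal is correct and follows essentially the same route as the paper's proof: equality of the Gaussian conditional laws forces $\exp(\alpha+\gamma z)=\exp(\alpha'+\gamma' z)$ on the support of $z_t$, and two distinct values of $z$ then give $\gamma=\gamma'$ and hence $\alpha=\alpha'$. Your handling of null sets is more careful than the paper's, but the covariance device is unnecessary and needs $\mathrm{E}[z_t^2]<\infty$, which this exponential-variance model does not guarantee; it is enough to note that $a+bz_t=0$ a.s.\ with $b\neq 0$ would make $z_t$ degenerate.
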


\begin{proof}

Because $x_{1:T}$ is observed and $\mu,\phi$ are known,

\[
e_t
=
x_t-
[
\mu+\phi(x_{t-1}-\mu)
]
\]

is uniquely determined.

Equality of Gaussian conditional distributions implies

\[
\exp(\alpha+\gamma z)
=
\exp(\alpha'+\gamma' z).
\]

Taking logarithms yields

\[
(\alpha-\alpha')
+
(\gamma-\gamma')z
=
0.
\]

Since $z_t$ takes at least two distinct values,

\[
\gamma=\gamma',
\]

and therefore

\[
\alpha=\alpha'.
\]

Hence $(\alpha,\gamma)$ are identifiable.

\end{proof}

Proposition 1 establishes structural identifiability of the volatility
mapping conditional on knowledge of the latent trajectory and the
mean-dynamics parameters. This result should not be interpreted as a
proof of identifiability for the fully observed-data model, since under
partial observation the latent states themselves must be estimated.
Consequently, the practical recoverability of $(\alpha,\gamma)$ depends
on observational noise, trajectory length, latent persistence, and the
accuracy of the particle-based approximation. These factors are
investigated empirically in the simulation study.
\section*{Simulation Experiments}

\subsection*{Simulation Setup}

Simulation experiments evaluated recovery of state-coupled volatility under partial observation. Synthetic datasets were generated from

\[
x_t
=
\mu+\phi(x_{t-1}-\mu)+\varepsilon_t,
\]

with

\[
\varepsilon_t \sim \mathcal N(0,q_t),
\]

where

\[
q_t
=
\exp\left(\alpha+\gamma |x_{t-1}-\mu|\right).
\]

Observed measurements were generated as

\[
y_t=x_t+v_t,
\qquad
v_t\sim\mathcal N(0,r).
\]

Unless otherwise stated, simulations used

\[
\mu=0,
\qquad
\alpha=-2.5.
\]

Simulation conditions varied coupling strength, latent persistence, observation noise, and trajectory length (Table~\ref{tab:simulation_grid}). For each parameter combination, 30 independent replicates were generated using randomized seeds.

\begin{table}[ht]
\centering
\caption{Simulation parameter grid used in the benchmark study.}
\label{tab:simulation_grid}
\begin{tabular}{ll}
\toprule
$\gamma$ (coupling strength) &
$\{0.0,\,0.3,\,0.6,\,0.8,\,1.0\}$ \\

$\phi$ (latent persistence) &
$\{0.3,\,0.6,\,0.9\}$ \\

$r$ (observation noise) &
$\{0.01,\,0.05,\,0.10\}$ \\

$T$ (trajectory length) &
$\{250,\,500,\,1000\}$ \\
\bottomrule
\end{tabular}
\end{table}

The full benchmark therefore consisted of

\[
5\times3\times3\times3\times30
=
4050
\]

independent simulated datasets.

\subsection*{Evaluation Metrics}

Performance was evaluated using three complementary criteria:

\begin{itemize}
    \item parameter recovery: bias and RMSE;
    \item coupling discrimination: ROC AUC across coupling thresholds;
\end{itemize}

For each simulated dataset, the complete inference pipeline was applied, including particle filtering, backward trajectory smoothing, particle EM estimation, and empirical null calibration. 
\subsubsection*{Coupling Discrimination and AUC}

Detection performance was evaluated using receiver operating characteristic (ROC) analysis. For each simulated dataset, the estimated coupling parameter $\hat{\gamma}$ served as the detection statistic. Binary class labels were defined using the true coupling parameter. Datasets generated with $\gamma_{\text{true}} \ge 0.3$ were classified as positive examples, whereas datasets generated with $\gamma_{\text{true}} < 0.3$ were classified as negative examples. Given the simulation design $\gamma \in \{0.0, 0.3, 0.6, 0.8, 1.0\}$, this corresponds to treating all nonzero coupling conditions as positives and the uncoupled condition ($\gamma = 0$) as the negative class.

For each combination of model, latent persistence ($\phi$), observation noise ($r$), and trajectory length ($T$), ROC curves were constructed by varying a decision threshold over the range of estimated coupling values $\hat{\gamma}$. At each threshold, the corresponding true-positive and false-positive rates were computed, and the area under the ROC curve (AUC) was calculated using standard trapezoidal integration. An AUC value of $0.5$ indicates chance-level discrimination, whereas an AUC value of $1.0$ indicates perfect separation between coupled and uncoupled systems.

This analysis evaluates the ability of each method to detect the presence of state-coupled volatility, irrespective of the exact magnitude of the underlying coupling. The threshold $\gamma_{\text{true}} = 0.3$ was selected because it represents the smallest nonzero coupling value included in the simulation design, thereby operationalizing detection of any departure from the null condition.
\subsubsection*{Confidence Intervals}

To characterize uncertainty in simulation-based performance estimates, 95$\%$ confidence intervals were computed for recovery and detection metrics across simulation replicates.

For parameter recovery, bias was defined as

\begin{equation}
\text{Bias} = \hat{\gamma} - \gamma,
\end{equation}

where $\hat{\gamma}$ denotes the estimated coupling parameter and $\gamma$ the true data-generating value. Confidence intervals for mean bias were computed using the Student-$t$ distribution,

\begin{equation}
\bar{b}
\pm
t_{0.975,n-1}
\frac{s_b}{\sqrt{n}},
\end{equation}

where $\bar{b}$ is the mean bias across simulation replicates, $s_b$ is the sample standard deviation of the bias estimates, and $n$ is the number of replicates.

For root mean squared error (RMSE), confidence intervals were obtained using nonparametric bootstrap resampling. Within each simulation condition, replicate-level estimation errors were resampled with replacement 2,000 times. RMSE was recomputed for each bootstrap sample, and the 2.5th and 97.5th percentiles of the resulting bootstrap distribution were used to form a 95\% confidence interval.
\subsection*{Benchmark Estimators}

Performance was compared against four benchmark procedures representing distinct approaches to modeling variability in partially observed dynamical systems. All benchmark models employed the same latent-state dynamics,

\begin{equation}
x_t
=
\mu
+
\phi(x_{t-1}-\mu)
+
\epsilon_t,
\end{equation}

but differed in how process variance was specified. This design isolates the contribution of the variance model while holding the underlying latent dynamics constant.

\paragraph{Constant-Variance Latent Dynamical System}

The Constant LDS assumes homoscedastic process noise:

\begin{equation}
\epsilon_t \sim \mathcal{N}(0,q),
\end{equation}

where \(q\) is a constant variance parameter estimated from the data.

\paragraph{Observed-State Heteroskedastic Proxy}

Variance was modeled as a function of observed displacement from equilibrium:

\begin{equation}
\epsilon_t \sim \mathcal{N}(0,q_t),
\end{equation}

\begin{equation}
\log q_t
=
\alpha
+
\beta \left| y_{t-1}-\mu \right|.
\end{equation}

This benchmark assesses whether observed-state variability alone is sufficient to recover state-dependent variance structure.

\paragraph{GARCH Proxy}

Conditional variance followed a GARCH(1,1) recursion:

\begin{equation}
\epsilon_t \sim \mathcal{N}(0,h_t),
\end{equation}

\begin{equation}
h_t
=
\omega
+
a\epsilon_{t-1}^{2}
+
bh_{t-1}.
\end{equation}

This model captures volatility clustering through past residual variance but does not explicitly incorporate latent-state position.

\paragraph{Stochastic Volatility Proxy}

As a comparison to state-dependent variance models, a simplified stochastic volatility proxy was constructed in which volatility evolves independently of latent-state position. Let

\begin{equation}
e_t = y_t - \hat{y}_t
\end{equation}

denote prediction residuals obtained from the latent-state model. Log-variance dynamics were then approximated by

\begin{equation}
z_t
=
\rho z_{t-1}
+
\eta_t,
\end{equation}

\begin{equation}
\eta_t
\sim
\mathcal{N}(0,\sigma_z^2),
\end{equation}

with implied variance

\begin{equation}
q_t = \exp(z_t).
\end{equation}

Unlike the proposed framework, volatility evolves independently of latent-state position and therefore cannot represent state-dependent variance structure. This benchmark is intended to capture generic time-varying volatility rather than provide a full stochastic-volatility state-space implementation.

\subsection*{Particle EM Configuration}

The proposed estimator used sequential Monte Carlo inference combined with particle expectation maximization. Default estimation settings are summarized in Table~\ref{tab:pem_config}.

\begin{table}[ht]
\centering
\caption{Particle EM estimation configuration.}
\label{tab:pem_config}
\begin{tabular}{ll}
\toprule
Component & Setting \\
\midrule
Initial $\alpha$ & $-2.0$ \\
Initial $\gamma$ & $0.3$ \\
Particle count& 3000 \\
Backward sampled trajectories & 300 \\
Maximum EM iterations & 20 \\
Particle resampling & Systematic \\
Optimization & L-BFGS-B \\
$\alpha$ bounds & $[-8,4]$ \\
$\gamma$ bounds & $[-5,5]$ \\
Convergence tolerance & $10^{-3}$ \\
Damping parameter & 0.5 \\
\bottomrule
\end{tabular}
\end{table}
Table~\ref{tab:pem_config} summarizes the default particle EM settings used throughout the simulation study. The estimation procedure was initialized at $\alpha=-2.0$ and $\gamma=0.3$, providing reasonable starting values near the range of simulated parameters while avoiding initialization at the true generating values. Filtering was performed using 3000 particles, and posterior trajectory uncertainty was approximated using 300 backward-sampled trajectories from the smoothing distribution. Parameter updates were obtained by minimizing the volatility objective using the bounded L-BFGS-B algorithm, with $\alpha \in [-8,4]$ and $\gamma \in [-5,5]$ to prevent numerically unstable variance estimates. EM updates were damped using a learning rate of $\lambda=0.5$ to reduce Monte Carlo variability and improve convergence stability. Iterations terminated when successive parameter estimates changed by less than $10^{-3}$ or after a maximum of 20 iterations.
The volatility parameters $(\alpha,\gamma)$ were estimated using the bounded L-BFGS-B algorithm. This optimizer was selected because the M-step objective is smooth and continuously differentiable with respect to both parameters, making gradient-based optimization efficient. In addition, L-BFGS-B permits explicit parameter bounds, allowing the search to be restricted to numerically stable regions of the parameter space and preventing extreme variance estimates that can arise from the exponential volatility mapping. Because only two parameters are optimized during each M-step, L-BFGS-B provides rapid convergence while maintaining robustness across simulation conditions \cite{zhu1997lbfgsb}.
\subsection*{Results}
\subsubsection*{Recovery Across Coupling Strength and Observation Noise}
Recovery performance exhibited a clear bias--variance tradeoff. The proposed state-coupled framework consistently produced substantially lower bias than the observed-state heteroskedastic proxy across all coupling strengths (Table ~\ref{tab:proxy_recovery}, Table ~\ref{tab:state_recovery}). For example, when $\gamma=1.0$, the heteroskedastic proxy underestimated coupling by approximately $0.42$ on average, whereas the proposed method underestimated coupling by only $0.11$ (Figure ~\ref{fig:gamabs}). This pattern persisted across all nonzero coupling regimes, indicating that the proposed framework more accurately recovered the underlying coupling parameter.

Despite improved bias, the proposed model exhibited higher RMSE in several regimes, reflecting a tradeoff between systematic error and estimation variability. This suggests that the reduction in systematic error was accompanied by increased estimation variability. Notably however, the RMSE gap decreased substantially as coupling strength increased. At $\gamma=0.3$, the difference in RMSE between the two approaches was approximately $0.25$, whereas at $\gamma=1.0$ the difference was reduced to approximately $0.03$ (Figure~\ref{fig:gamcoup}). These findings suggest that the proposed framework becomes increasingly competitive as state-dependent variability becomes stronger and therefore easier to identify from noisy observations.
\begin{figure} [H]
    \centering
    \includegraphics[width=1\linewidth]{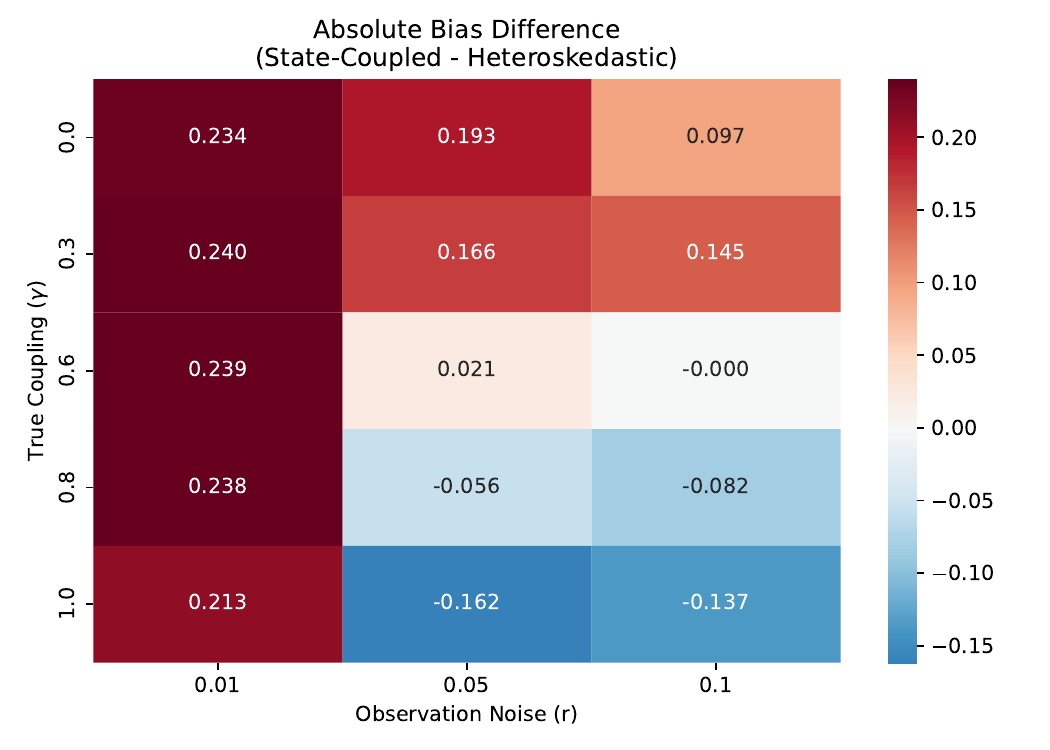}
    \caption{
Difference in absolute recovery bias between the state-coupled framework and the observed-state heteroskedastic proxy across coupling strength ($\gamma$) and observation noise ($r$). Values represent
$|\mathrm{Bias}|_{\text{Proxy}} - |\mathrm{Bias}|_{\text{State-Coupled}}$,
with positive values indicating lower absolute bias for the proposed model. Bias reduction was greatest under weak-to-moderate observation noise and remained substantial across a broad range of coupling strengths.
}
    \label{fig:gamabs}
\end{figure}
\begin{figure}[H]
    \centering
    \includegraphics[width=1\linewidth]{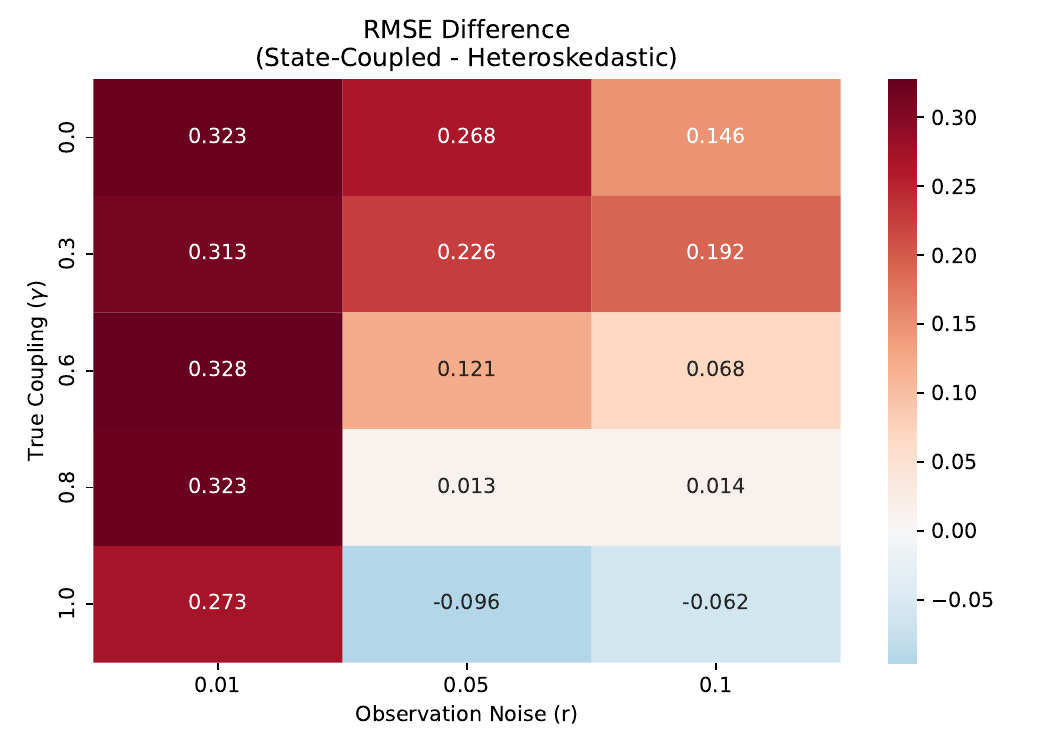}
    \caption{
Difference in parameter recovery RMSE between the state-coupled framework and the observed-state heteroskedastic proxy across coupling strength ($\gamma$) and observation noise ($r$). Values represent
$\mathrm{RMSE}_{\text{Proxy}} - \mathrm{RMSE}_{\text{State-Coupled}}$,
with positive values indicating lower recovery error for the proposed model. Recovery improvements were largest under low observation noise and generally increased with coupling strength.
}
    \label{fig:gamcoup}
\end{figure}

\begin{table}[ht]
\centering
\caption{Recovery performance of the heteroskedastic proxy. Each cell reports Bias (95$\%$ CI) / RMSE (95$\%$ CI).}
\label{tab:proxy_recovery}
\small
\begin{tabular}{lccc}
\toprule
$\gamma$ & $r=0.01$ & $r=0.05$ & $r=0.10$ \\
\midrule

0.0 &
\shortstack{-0.040 [-0.076,-0.004] \\ 0.301 [0.266,0.335]} &
\shortstack{0.089 [0.058,0.120] \\ 0.273 [0.247,0.299]} &
\shortstack{0.169 [0.141,0.197] \\ 0.289 [0.265,0.312]} \\

0.3 &
\shortstack{-0.084 [-0.115,-0.053] \\ 0.272 [0.243,0.304]} &
\shortstack{-0.123 [-0.152,-0.094] \\ 0.271 [0.245,0.298]} &
\shortstack{-0.102 [-0.131,-0.073] \\ 0.259 [0.233,0.285]} \\

0.6 &
\shortstack{-0.128 [-0.158,-0.097] \\ 0.284 [0.246,0.330]} &
\shortstack{-0.299 [-0.330,-0.269] \\ 0.391 [0.356,0.427]} &
\shortstack{-0.322 [-0.349,-0.295] \\ 0.393 [0.367,0.419]} \\

0.8 &
\shortstack{-0.151 [-0.183,-0.118] \\ 0.309 [0.275,0.343]} &
\shortstack{-0.377 [-0.408,-0.346] \\ 0.457 [0.424,0.493]} &
\shortstack{-0.457 [-0.481,-0.434] \\ 0.498 [0.473,0.523]} \\

1.0 &
\shortstack{-0.194 [-0.226,-0.162] \\ 0.331 [0.297,0.370]} &
\shortstack{-0.481 [-0.510,-0.453] \\ 0.538 [0.509,0.566]} &
\shortstack{-0.580 [-0.607,-0.552] \\ 0.624 [0.595,0.652]} \\

\bottomrule
\end{tabular}
\end{table}
\begin{table}[ht]
\centering
\caption{Recovery performance of the state-coupled model. Each cell reports Bias (95$\%$ CI) / RMSE (95$\%$ CI).}
\label{tab:state_recovery}
\small
\begin{tabular}{lccc}
\toprule
$\gamma$ & $r=0.01$ & $r=0.05$ & $r=0.10$ \\
\midrule

0.0 &
\shortstack{-0.013 [-0.088,0.062] \\ 0.624 [0.553,0.690]} &
\shortstack{-0.042 [-0.106,0.023] \\ 0.540 [0.488,0.594]} &
\shortstack{0.048 [-0.003,0.100] \\ 0.435 [0.392,0.486]} \\

0.3 &
\shortstack{0.136 [0.067,0.204] \\ 0.585 [0.529,0.645]} &
\shortstack{-0.114 [-0.172,-0.056] \\ 0.498 [0.448,0.548]} &
\shortstack{-0.121 [-0.173,-0.069] \\ 0.451 [0.405,0.493]} \\

0.6 &
\shortstack{0.211 [0.142,0.280] \\ 0.611 [0.531,0.694]} &
\shortstack{-0.112 [-0.172,-0.052] \\ 0.512 [0.436,0.594]} &
\shortstack{-0.227 [-0.275,-0.179] \\ 0.460 [0.411,0.509]} \\

0.8 &
\shortstack{0.301 [0.234,0.367] \\ 0.632 [0.565,0.696]} &
\shortstack{-0.120 [-0.174,-0.065] \\ 0.471 [0.416,0.526]} &
\shortstack{-0.339 [-0.385,-0.293] \\ 0.512 [0.451,0.572]} \\

1.0 &
\shortstack{0.263 [0.198,0.328] \\ 0.604 [0.547,0.662]} &
\shortstack{-0.177 [-0.226,-0.129] \\ 0.444 [0.396,0.492]} &
\shortstack{-0.419 [-0.464,-0.375] \\ 0.562 [0.504,0.618]} \\

\bottomrule
\end{tabular}
\end{table}
\subsubsection*{Recovery Across Trajectory Length and Observation Noise}
Recovery behavior depended strongly on both trajectory length and observation noise. A consistent pattern across all simulation settings was the tendency of the heteroskedastic proxy to underestimate the true coupling parameter. This attenuation became increasingly severe as measurement noise increased. For shorter trajectories $T=250$, mean bias shifted from approximately $-0.15$ at $r=0.01$ to nearly $-0.29$ at $r=0.10$, suggesting that direct reliance on noisy observations increasingly obscures the underlying state-dependent structure.

The proposed state-coupled framework exhibited a markedly different pattern. Although bias was not eliminated entirely, estimates remained substantially closer to the true coupling parameter across most moderate- and high-noise regimes. For example, at $T=1000$ and $r=0.05$, the proposed model produced a bias of $-0.088$ , compared with $-0.237$ for the heteroskedastic proxy (Figure~\ref{fig:trajbias}). This reduction in attenuation was observed repeatedly across the simulation grid, indicating that latent-state reconstruction helps preserve information about the underlying coupling mechanism even when observations are corrupted by measurement noise .

The improvement in bias came at the cost of increased variability. In low-noise settings, where observations already provided a relatively faithful representation of the latent process, the heteroskedastic proxy achieved substantially lower RMSE. At $T=1000$ and $r=0.01$, for instance, the proxy achieved an RMSE of $0.209$), compared with $0.480$ for the proposed method. Thus, while the latent model more accurately recovered the coupling parameter on average, it did so with greater finite-sample uncertainty.

Notably, this RMSE disadvantage diminished as additional temporal information became available. Across all noise levels, recovery error for the proposed model decreased steadily with increasing trajectory length. At $r=0.10$, RMSE declined from $0.595$ at $T=250$ to $0.386$ at $T=1000$, while corresponding reductions were observed in the lower-noise settings as well (Table ~\ref{tab:recovery_ci_r_T}). This suggests that a substantial portion of the observed error reflects estimation uncertainty rather than structural deficiencies of the model itself.

Taken together, these results point to a clear tradeoff. When observations are highly informative, simpler observed-state approaches provide stable estimates with relatively low error. As measurements become noisier, however, those methods increasingly underestimate the strength of state-dependent variability. The proposed state-coupled framework sacrifices some finite-sample efficiency in exchange for substantially improved recovery fidelity, and becomes progressively more competitive as both observation noise and trajectory length increase (Table~\ref{fig:recovery_traj_noise}). These patterns are consistent with the primary motivation of the model: recovering latent state-dependent structure when direct observations provide only a noisy view of the underlying dynamics.
\begin{figure}[H]
    \centering
    \includegraphics[width=1\linewidth]{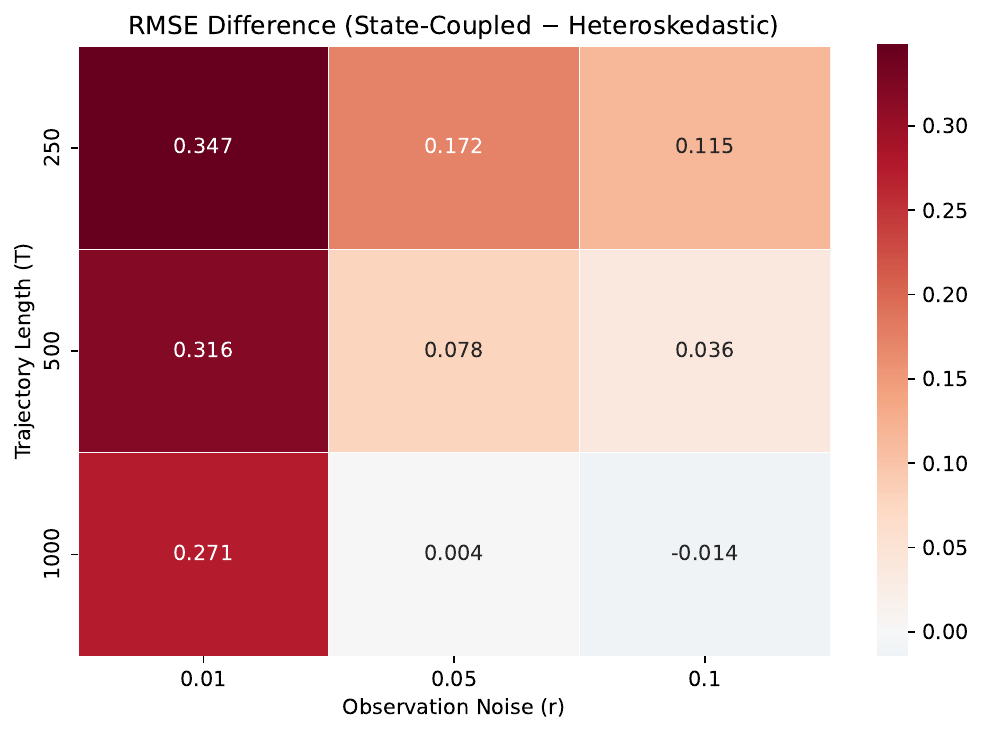}
    \caption{Difference in parameter recovery RMSE between the state-coupled framework and the observed-state heteroskedastic proxy across trajectory length ($T$) and observation noise ($r$). Values represent $\mathrm{RMSE}_{\text{Proxy}} - \mathrm{RMSE}_{\text{State-Coupled}}$, with positive values indicating lower recovery error for the proposed model. The largest improvements were observed for shorter trajectories and lower observation noise, while differences diminished as trajectory length increased and observations became noisier.
}
    \label{fig:recovery_traj_noise}
\end{figure}
\begin{figure} [H]
    \centering
    \includegraphics[width=1\linewidth]{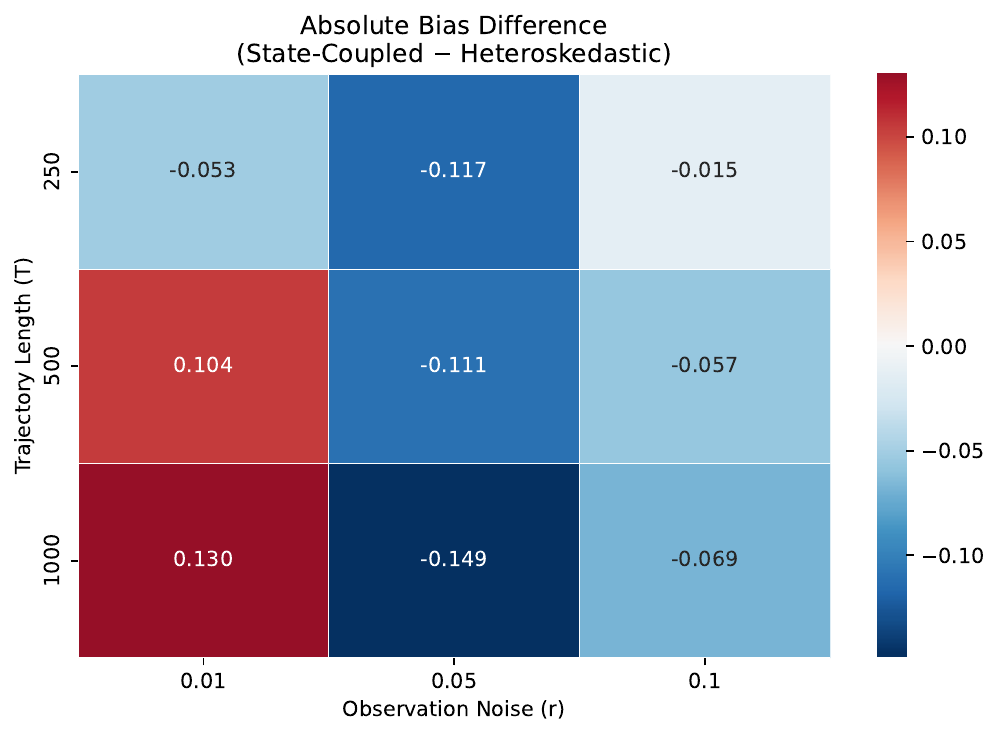}
    \caption{Difference in absolute recovery bias between the state-coupled framework and the observed-state heteroskedastic proxy across trajectory length ($T$) and observation noise ($r$). Values represent $|\mathrm{Bias}|_{\text{Proxy}} - |\mathrm{Bias}|_{\text{State-Coupled}}$, with positive values indicating lower absolute bias for the proposed model. Bias reduction was generally strongest for longer trajectories and moderate observation noise, although improvements were observed across most simulation conditions.
}
    \label{fig:trajbias}
\end{figure} 
\begin{table}[H]
\centering
\caption{Recovery performance across observation noise and trajectory length for the Heteroskedastic Proxy and State-Coupled models. Values show mean bias and RMSE with 95$\%$ confidence intervals.}
\label{tab:recovery_ci_r_T}
\begin{tabular}{llccc}
\toprule
Model & $r$ & $T$ & Bias (95$\%$ CI) & RMSE (95$\%$ CI) \\
\midrule
Heteroskedastic Proxy & $0.01$ & $250$  & $-0.149\,[-0.181,\,-0.116]$ & $0.379\,[0.349,\,0.411]$ \\
Heteroskedastic Proxy & $0.01$ & $500$  & $-0.103\,[-0.128,\,-0.078]$ & $0.288\,[0.266,\,0.311]$ \\
Heteroskedastic Proxy & $0.01$ & $1000$ & $-0.106\,[-0.123,\,-0.089]$ & $0.209\,[0.192,\,0.225]$ \\
State-Coupled         & $0.01$ & $250$  & $0.096\,[0.029,\,0.162]$    & $0.726\,[0.672,\,0.787]$ \\
State-Coupled         & $0.01$ & $500$  & $0.207\,[0.154,\,0.260]$    & $0.604\,[0.556,\,0.653]$ \\
State-Coupled         & $0.01$ & $1000$ & $0.236\,[0.197,\,0.274]$    & $0.480\,[0.447,\,0.513]$ \\
\bottomrule
\end{tabular}
\end{table}
\subsubsection*{Recovery Across Persistence and Observation Noise}
\begin{figure}[H]
    \centering
    \includegraphics[width=1\linewidth]{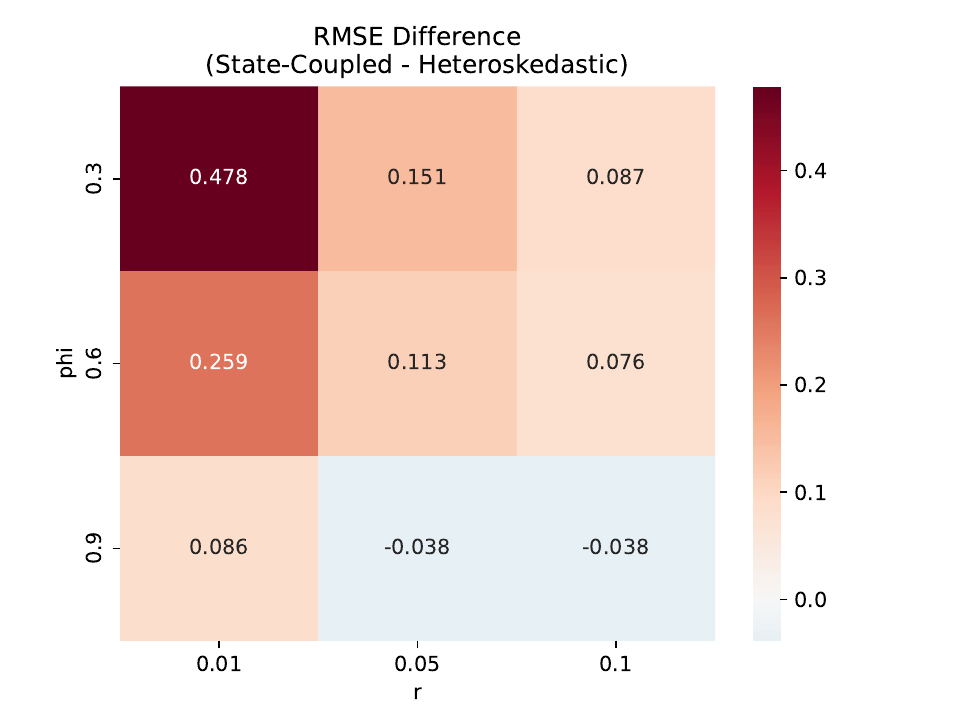}
    \caption{Difference in parameter recovery RMSE between the state-coupled framework and the observed-state heteroskedastic proxy across latent persistence ($\phi$) and observation noise ($r$).Values represent $\mathrm{RMSE}_{\text{Proxy}} - \mathrm{RMSE}_{\text{State-Coupled}}$, with positive values indicating lower recovery error for the proposed model. Performance advantages were greatest under low persistence and low-to-moderate observation noise, while differences diminished as persistence increased.
}
    \label{fig:latre}
\end{figure}
\begin{figure}[H]
    \centering
    \includegraphics[width=1\linewidth]{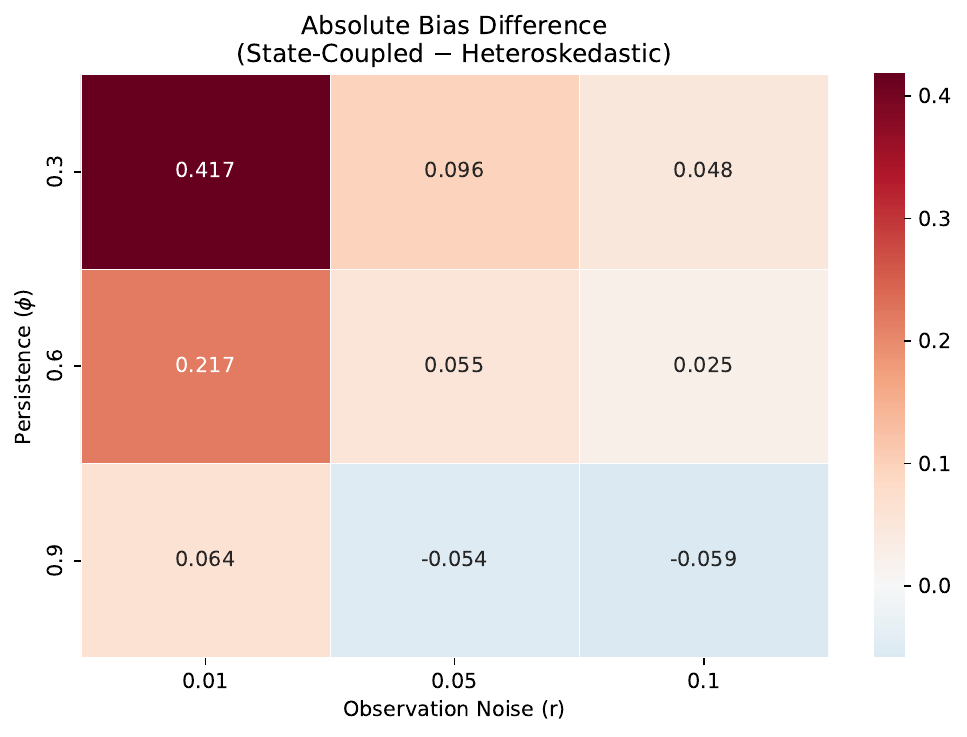}
    \caption{
Difference in absolute recovery bias between the state-coupled framework and the observed-state heteroskedastic proxy across latent persistence ($\phi$) and observation noise ($r$). Values represent
$|\mathrm{Bias}|_{\text{Proxy}} - |\mathrm{Bias}|_{\text{State-Coupled}}$, with positive values indicating lower absolute bias for the proposed model. Bias reduction was observed across nearly all simulation conditions, with the largest improvements occurring under low persistence and low observation noise.
}
    \label{fig:labi}
\end{figure}
The effects of observation noise depended strongly on the persistence of the underlying latent dynamics. When persistence was low $\phi=0.3$, both methods exhibited relatively large recovery errors, although the proposed state-coupled framework consistently produced estimates that were closer to the true coupling parameter. As observation noise increased, both approaches became more biased, although the heteroskedastic proxy generally continued to exhibit stronger attenuation of the coupling parameter (Figure ~\ref{fig:labi}).

As persistence increased, a different pattern emerged. Recovery improved substantially for both methods, suggesting that slowly evolving latent trajectories provide additional information about the relationship between state and variability. The improvement was particularly pronounced for the proposed state-coupled framework. Between $\phi=0.3$ and $\phi=0.9$, RMSE decreased from $0.852$ to $0.278$ at $r=0.01$, from $0.629$ to $0.250$ at $r=0.05$, and from $0.603$ to $0.296$ at $r=0.10$ (Table ~\ref{tab:phi_r_ci}).

The most favorable setting for the proposed model occurred under highly persistent dynamics $\phi=0.9$. In this regime, the proposed framework achieved both lower bias and lower RMSE than the heteroskedastic proxy in the moderate and high-noise settings. At $r=0.05$, the proposed model achieved an RMSE of $0.250$ compared with $0.288$ for the heteroskedastic proxy, while maintaining a smaller bias magnitude ($-0.085$ versus $-0.195$). A similar pattern was observed at $r=0.10$, where the proposed model achieved an RMSE of $0.296$ compared with $0.334$ for the heteroskedastic proxy (Figure ~\ref{fig:latre}).

These results suggest that the proposed framework benefits from situations in which the latent state remains informative over extended periods of time. When persistence is high, the model can more effectively leverage temporal structure to recover state-dependent variance dynamics, reducing both attenuation bias and overall recovery error. In contrast, when persistence is weak, the latent state becomes more difficult to reconstruct and the benefits of explicit latent-state modeling are diminished. Overall, the results indicate that the proposed framework is particularly well suited to systems characterized by both substantial observation noise and strong temporal persistence.
\begin{table}[ht]
\caption{Recovery performance by latent persistence ($\phi$) and observation noise ($r$). Values are mean bias and RMSE with 95$\%$ confidence intervals.}
\label{tab:phi_r_ci}
\begin{tabular}{lrrll}
\toprule
Model & $\phi$ & $r$ & Bias (95$\%$ CI) & RMSE (95$\%$ CI) \\
\midrule
heteroskedastic\_proxy & $0.3$ & $0.01$ & $-0.130\,[-0.163,\,-0.098]$ & $0.373\,[0.343,\,0.405]$ \\
heteroskedastic\_proxy & $0.3$ & $0.05$ & $-0.310\,[-0.344,\,-0.277]$ & $0.478\,[0.449,\,0.506]$ \\
heteroskedastic\_proxy & $0.3$ & $0.10$ & $-0.359\,[-0.393,\,-0.324]$ & $0.517\,[0.489,\,0.541]$ \\
heteroskedastic\_proxy & $0.6$ & $0.01$ & $-0.122\,[-0.148,\,-0.096]$ & $0.307\,[0.285,\,0.330]$ \\
heteroskedastic\_proxy & $0.6$ & $0.05$ & $-0.210\,[-0.243,\,-0.177]$ & $0.409\,[0.383,\,0.434]$ \\
heteroskedastic\_proxy & $0.6$ & $0.10$ & $-0.204\,[-0.239,\,-0.169]$ & $0.432\,[0.408,\,0.456]$ \\
heteroskedastic\_proxy & $0.9$ & $0.01$ & $-0.105\,[-0.119,\,-0.090]$ & $0.192\,[0.177,\,0.208]$ \\
heteroskedastic\_proxy & $0.9$ & $0.05$ & $-0.195\,[-0.215,\,-0.175]$ & $0.288\,[0.274,\,0.303]$ \\
heteroskedastic\_proxy & $0.9$ & $0.10$ & $-0.212\,[-0.236,\,-0.188]$ & $0.334\,[0.318,\,0.351]$ \\
state\_coupled & $0.3$ & $0.01$ & $0.333\,[0.260,\,0.406]$ & $0.852\,[0.802,\,0.908]$ \\
state\_coupled & $0.3$ & $0.05$ & $-0.135\,[-0.192,\,-0.078]$ & $0.629\,[0.583,\,0.678]$ \\
state\_coupled & $0.3$ & $0.10$ & $-0.325\,[-0.372,\,-0.277]$ & $0.603\,[0.563,\,0.644]$ \\
state\_coupled & $0.6$ & $0.01$ & $0.189\,[0.140,\,0.239]$ & $0.566\,[0.526,\,0.604]$ \\
state\_coupled & $0.6$ & $0.05$ & $-0.118\,[-0.165,\,-0.071]$ & $0.523\,[0.483,\,0.566]$ \\
state\_coupled & $0.6$ & $0.10$ & $-0.168\,[-0.213,\,-0.124]$ & $0.508\,[0.467,\,0.553]$ \\
state\_coupled & $0.9$ & $0.01$ & $0.016\,[-0.010,\,0.041]$ & $0.278\,[0.255,\,0.301]$ \\
state\_coupled & $0.9$ & $0.05$ & $-0.085\,[-0.107,\,-0.064]$ & $0.250\,[0.232,\,0.269]$ \\
state\_coupled & $0.9$ & $0.10$ & $-0.142\,[-0.166,\,-0.118]$ & $0.296\,[0.273,\,0.320]$ \\
\bottomrule
\end{tabular}
\end{table}
\subsubsection*{Recovery Across Coupling Strength and Persistence}
\begin{figure} [H]
    \centering
    \includegraphics[width=1\linewidth]{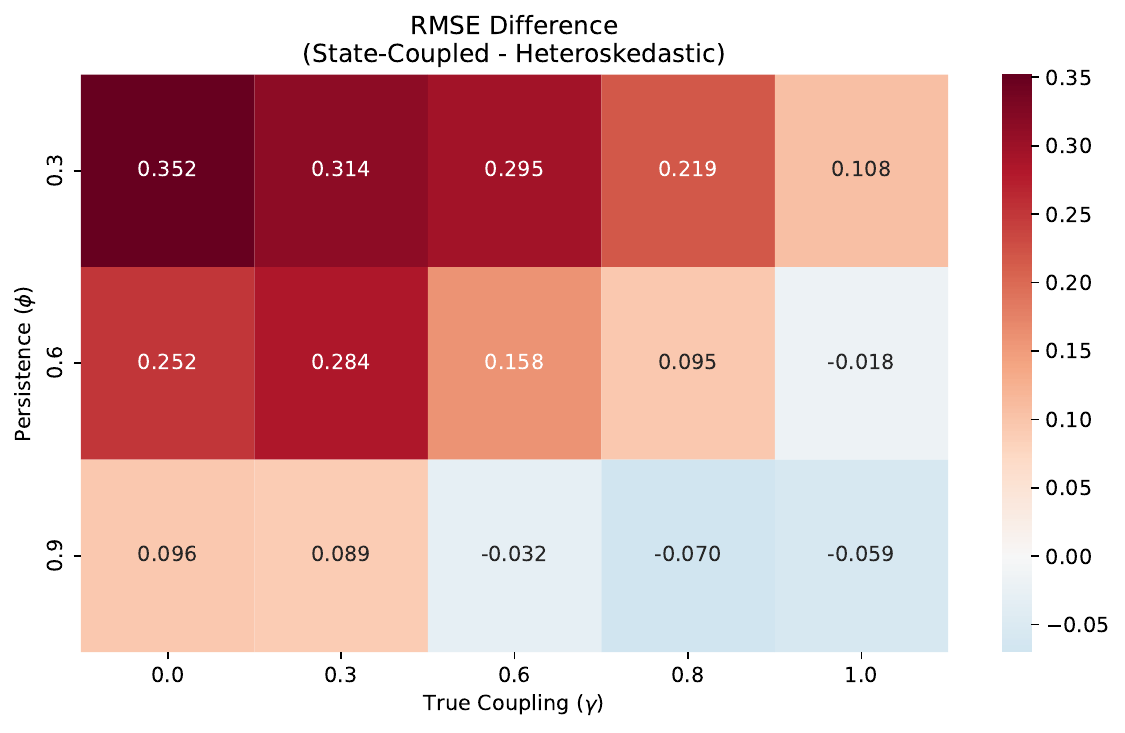}
    \caption{Difference in parameter recovery RMSE between the state-coupled framework and the observed-state heteroskedastic proxy across latent persistence ($\phi$) and coupling strength ($\gamma$). Values represent $\mathrm{RMSE}_{\text{State-Coupled}} - \mathrm{RMSE}_{\text{Proxy}}$; negative values indicate lower recovery error for the proposed model.}
    \label{fig:recper}
\end{figure}
\begin{figure}[H]
    \centering
    \includegraphics[width=1\linewidth]{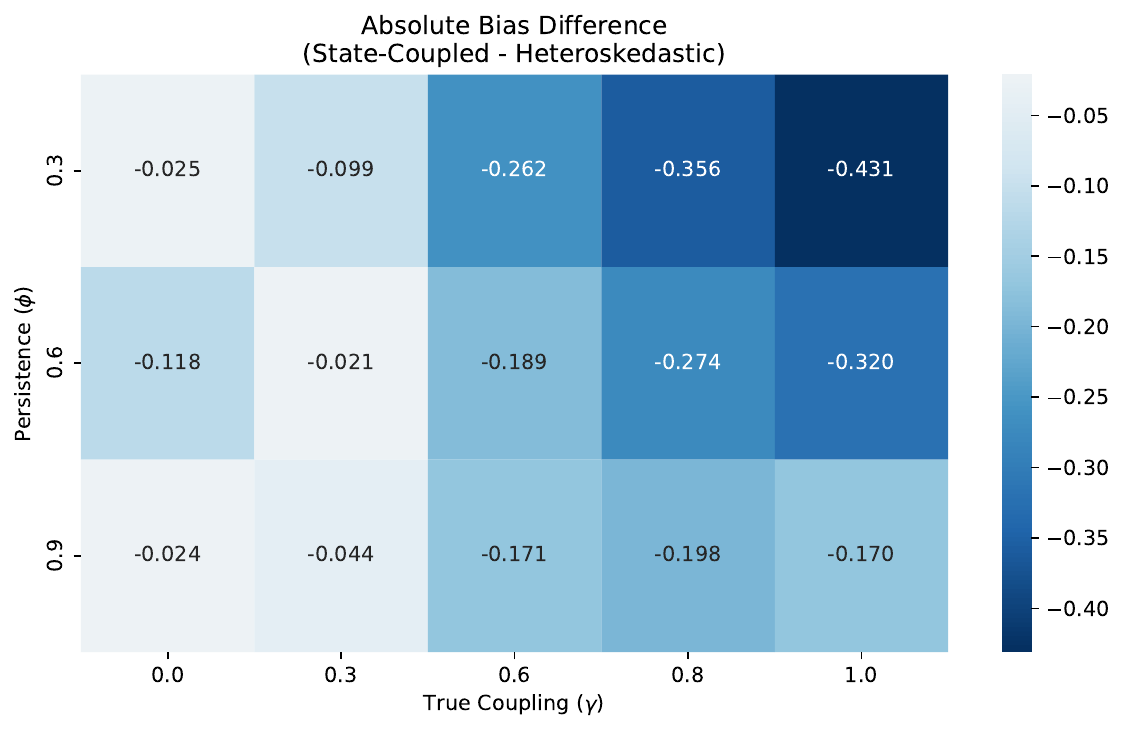}
    \caption{Difference in absolute recovery bias between the state-coupled framework and the observed-state heteroskedastic proxy across latent persistence ($\phi$) and coupling strength ($\gamma$). Values correspond to $|\mathrm{Bias}|_{\text{State-Coupled}} - |\mathrm{Bias}|_{\text{Proxy}}$, with negative values indicating lower absolute bias for the proposed model.
}
    \label{fig:absbicoup}
\end{figure}
The relationship between recovery performance, coupling strength, and latent persistence revealed several distinct patterns. For the heteroskedastic proxy, recovery bias increased steadily as the true coupling parameter increased. This trend was observed across all persistence levels, indicating that stronger state-dependent variance was progressively underestimated by the observed-state approach. For example, when $\phi=0.3$, bias increased from $-0.163$ at $\gamma=0.3$ to $-0.504$ at $\gamma=1.0$. Similar attenuation was observed for $\phi=0.6$ and $\phi=0.9$.

In contrast, the proposed state-coupled framework exhibited substantially greater stability across coupling strengths. Bias remained relatively small throughout the examined parameter range, with estimates generally remaining within approximately $\pm0.1$ of the true coupling value. For example, at $\phi=0.3$, bias varied only from $-0.064$ to $-0.073$ as coupling increased from $\gamma=0.3$ to $\gamma=1.0$. This suggests that the latent-state formulation is considerably less susceptible to attenuation as state-dependent variance becomes stronger.

The influence of persistence was equally pronounced. For both methods, recovery improved as $\phi$ increased, indicating that persistent latent trajectories provide additional information about the underlying variance structure. However, the improvement was especially notable for the proposed framework. At $\phi=0.9$, RMSE decreased substantially across all coupling strengths relative to the lower-persistence settings (Figure~\ref{fig:absbicoup}).

Most notably, under highly persistent dynamics $\phi=0.9$, the proposed framework achieved lower RMSE than the heteroskedastic proxy for moderate and strong coupling regimes. At $\gamma=0.6$, RMSE was $0.246$ for the proposed model compared with $0.278$ for the heteroskedastic proxy. Similar advantages were observed at $\gamma=0.8$ ($0.237$ versus $0.307$) and $\gamma=1.0$ ($0.317$ versus $0.376$) (Table ~\ref{tab:phi_gam_ci}). These gains occurred while maintaining substantially smaller bias magnitudes (Figure~\ref{fig:recper}) . 

The proposed framework benefits most from settings in which the latent state is both persistent and strongly coupled to variability. In such regimes, the model not only preserves recovery fidelity but can also outperform simpler observed-state approaches in overall estimation accuracy.
\begin{table}[ht]
\centering
\caption{Recovery performance by latent persistence ($\phi$) and coupling strength ($\gamma$). Values are reported as mean bias and RMSE with 95$\%$ confidence intervals.}
\label{tab:phi_gam_ci}
\small
\begin{tabular}{llcll}
\toprule
Model & $\phi$ & $\gamma$ & Bias (95$\%$ CI) & RMSE (95$\%$ CI) \\
\midrule
Heteroskedastic Proxy & $0.3$ & $0.0$ & $0.041\,[0.003,\,0.078]$ & $0.314\,[0.284,\,0.344]$ \\
Heteroskedastic Proxy & $0.3$ & $0.3$ & $-0.163\,[-0.197,\,-0.128]$ & $0.330\,[0.302,\,0.360]$ \\
Heteroskedastic Proxy & $0.3$ & $0.6$ & $-0.304\,[-0.340,\,-0.267]$ & $0.432\,[0.395,\,0.471]$ \\
Heteroskedastic Proxy & $0.3$ & $0.8$ & $-0.403\,[-0.444,\,-0.362]$ & $0.528\,[0.493,\,0.563]$ \\
Heteroskedastic Proxy & $0.3$ & $1.0$ & $-0.504\,[-0.547,\,-0.461]$ & $0.621\,[0.588,\,0.654]$ \\
\midrule
State-Coupled & $0.3$ & $0.0$ & $0.016\,[-0.064,\,0.096]$ & $0.666\,[0.600,\,0.731]$ \\
State-Coupled & $0.3$ & $0.3$ & $-0.064\,[-0.141,\,0.013]$ & $0.644\,[0.591,\,0.704]$ \\
State-Coupled & $0.3$ & $0.6$ & $-0.042\,[-0.129,\,0.046]$ & $0.727\,[0.652,\,0.811]$ \\
State-Coupled & $0.3$ & $0.8$ & $-0.047\,[-0.137,\,0.042]$ & $0.747\,[0.686,\,0.807]$ \\
State-Coupled & $0.3$ & $1.0$ & $-0.073\,[-0.161,\,0.014]$ & $0.729\,[0.671,\,0.787]$ \\
\bottomrule
\end{tabular}
\end{table}
\subsubsection*{Detection Performance Across Simulation Settings}
\begin{figure}[H]
    \centering
    \includegraphics[width=1\linewidth]{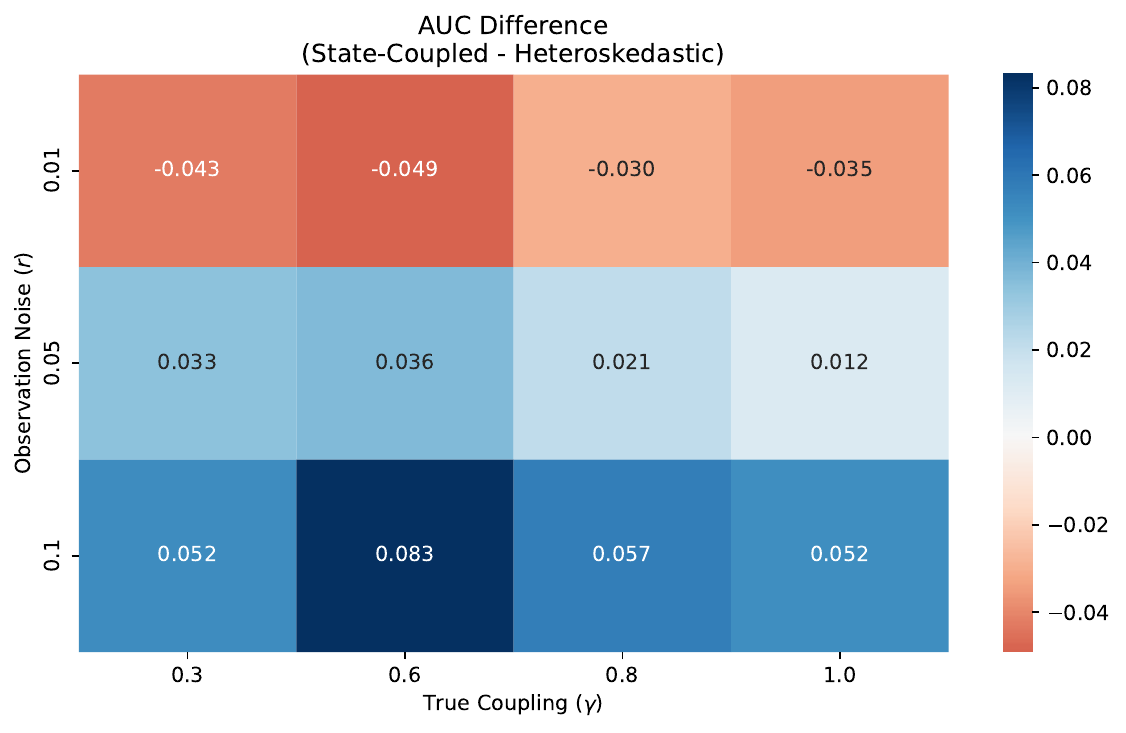}
    \caption{Difference in detection performance AUC between the proposed state-coupled framework and the heteroskedastic proxy across observation noise levels r and coupling strengths $\gamma$. Positive values indicate superior detection performance for the proposed method.}
    \label{fig:obsauc}
\end{figure}
\begin{figure}[H]
    \centering
    \includegraphics[width=1\linewidth]{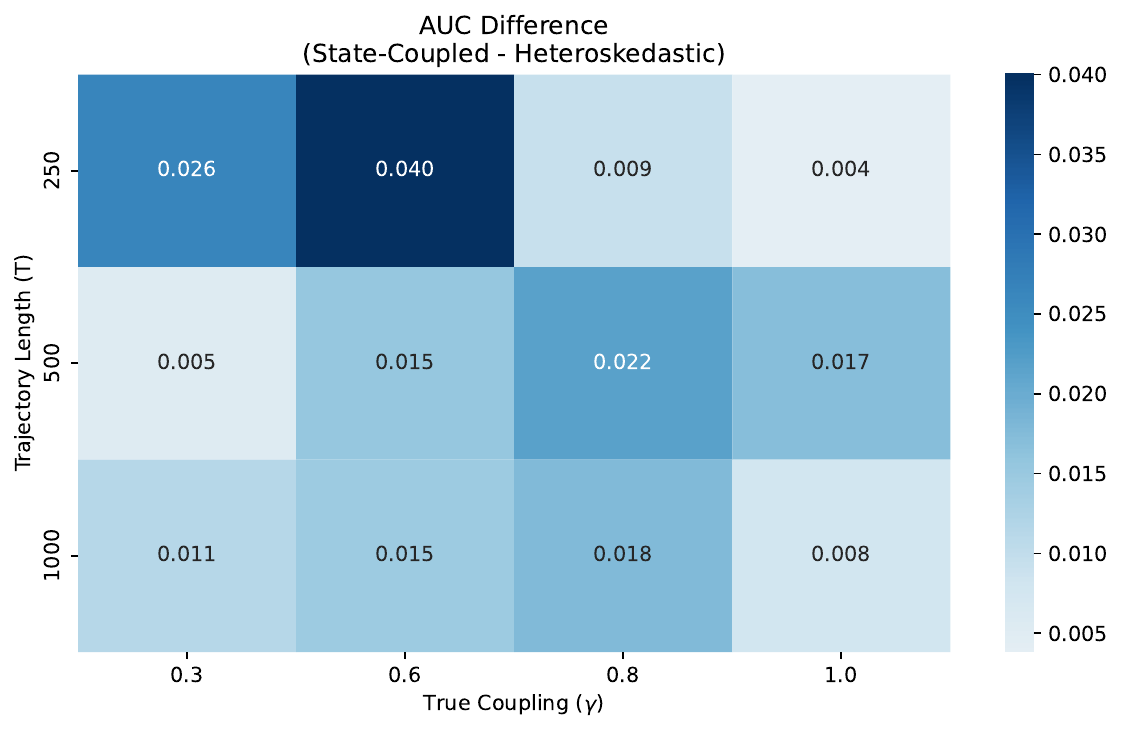}
    \caption{Difference in detection performance AUC between the proposed state-coupled framework and the heteroskedastic proxy across trajectory length T and coupling strengths $\gamma$. Positive values indicate superior detection performance for the proposed method.}
    \label{fig:traauc}
\end{figure}
\begin{figure}[H]
    \centering
    \includegraphics[width=1\linewidth]{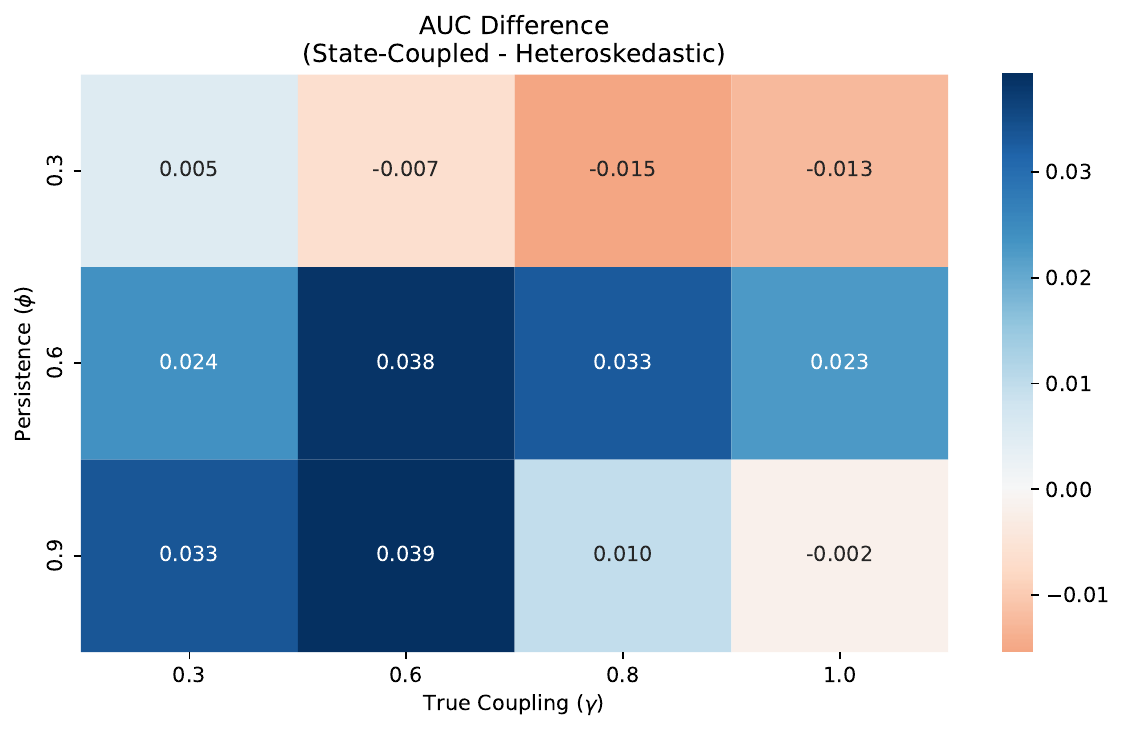}
    \caption{Difference in detection performance AUC between the proposed state-coupled framework and the heteroskedastic proxy across latent persistence  $\phi$ and coupling strengths $\gamma$. Positive values indicate superior detection performance for the proposed method.}
    \label{fig:perauc}
\end{figure}
Detection performance was generally high across all methods and simulation conditions, although the relative advantage of the proposed framework depended on the underlying characteristics of the latent process.

Trajectory length had only a modest effect on the relative performance of the competing methods. Across all combinations of coupling strength and trajectory length, the proposed state-coupled framework achieved slightly higher AUC values than the heteroskedastic proxy, with improvements ranging from $0.004$ to $0.040$ (Figure~\ref{fig:traauc}). These gains were consistent across all trajectory lengths examined, suggesting that additional observations improved detection performance for both methods without substantially altering their relative ranking.

In contrast, observation noise had a much larger impact on comparative performance. Under low-noise conditions $r=0.01$, the heteroskedastic proxy achieved slightly higher AUC values across all coupling strengths, with differences ranging from $0.030$ to $0.049$ in favor of the proxy. However, this pattern reversed as noise increased. At moderate noise levels $r=0.05$, the proposed framework achieved higher AUC values across all coupling strengths, while under high-noise conditions $r=0.10$ the advantage became more pronounced, with improvements ranging from $0.052$ to $0.083$ (Figure~\ref{fig:obsauc}). These findings suggest that the primary detection advantage of the proposed framework arises when observations provide an imperfect representation of the latent process, allowing latent-state reconstruction to recover information obscured by measurement noise.

Detection performance also depended on latent persistence. Under weak persistence $\phi=0.3$, differences between methods were small and generally favored the heteroskedastic proxy, with AUC differences ranging from $-0.015$ to $0.005$. As persistence increased, the proposed framework became increasingly advantageous. At $\phi=0.6$, the proposed model consistently outperformed the heteroskedastic proxy across all coupling strengths, with AUC improvements ranging from $0.023$ to $0.038$. Similar improvements were observed at $\phi=0.9$, although the advantage diminished slightly at the strongest coupling levels (Figure~\ref{fig:perauc}).

Taken together, these results suggest that observation noise and latent persistence are the primary determinants of detection advantage. The proposed framework provided its greatest improvements when observations were noisy and the latent dynamics exhibited moderate to strong temporal persistence. In contrast, when observations closely reflected the latent state or when persistence was weak, simpler observed-state approaches achieved comparable or slightly better detection performance. Notably, the proposed model never exhibited large detection disadvantages, while providing clear gains in several practically relevant regimes characterized by noisy observations and persistent latent dynamics.
\subsubsection*{Comparison With Alternative Volatility Models}
\begin{figure}[H]
    \centering
    \includegraphics[width=1\linewidth]{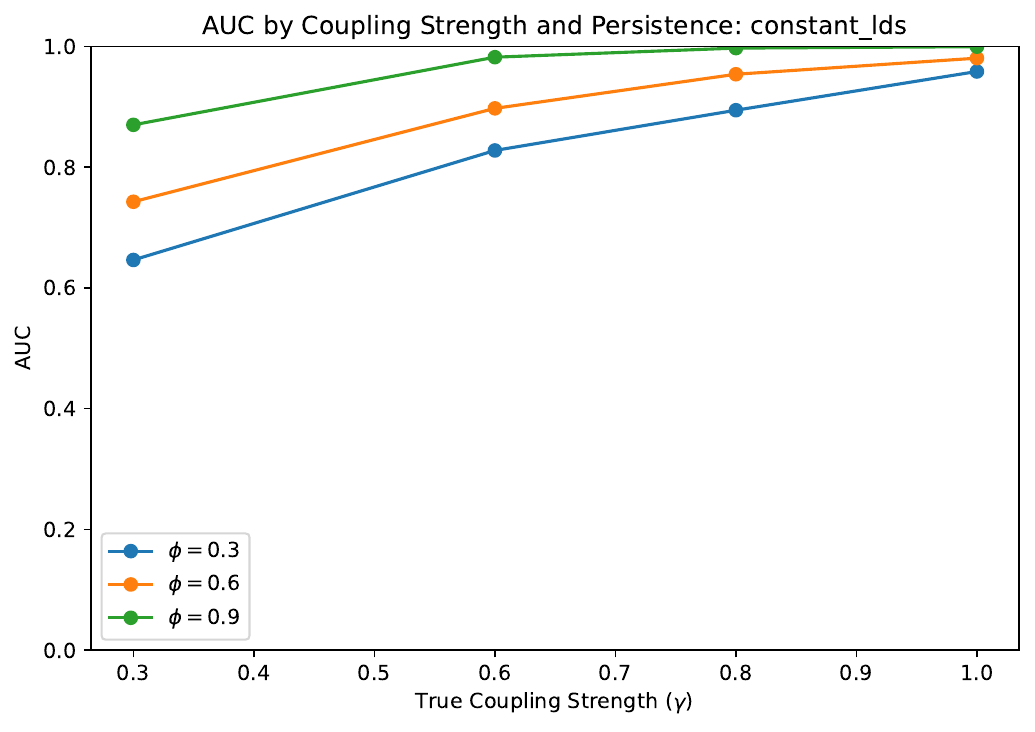}
    \caption{Detection performance AUC of the constant-variance latent dynamical system Constant LDS across coupling strengths $\gamma$ and persistence levels $\phi$. Although the model does not explicitly represent state-dependent variance, detection performance increases with coupling strength because stronger coupling produces departures from the constant-variance assumptions of the model.}
    \label{fig:conlds}
\end{figure}
\begin{figure}[H]
    \centering
    \includegraphics[width=1\linewidth]{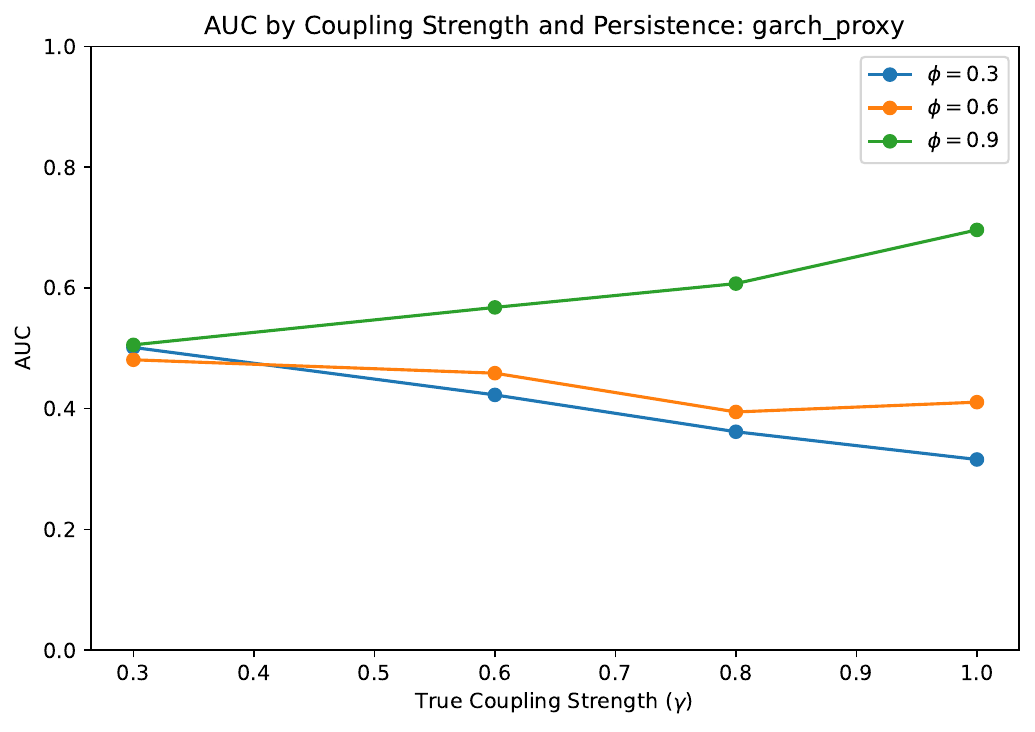}
    \caption{AUC of the GARCH proxy across coupling strengths $\gamma$ and persistence levels $\phi$. Performance remains near chance across most simulation settings, indicating that volatility models based primarily on variance clustering are poorly aligned with the state-dependent variance mechanism generating the data.}
    \label{fig:garchauc}
\end{figure}
\begin{figure}[H]
    \centering
    \includegraphics[width=1\linewidth]{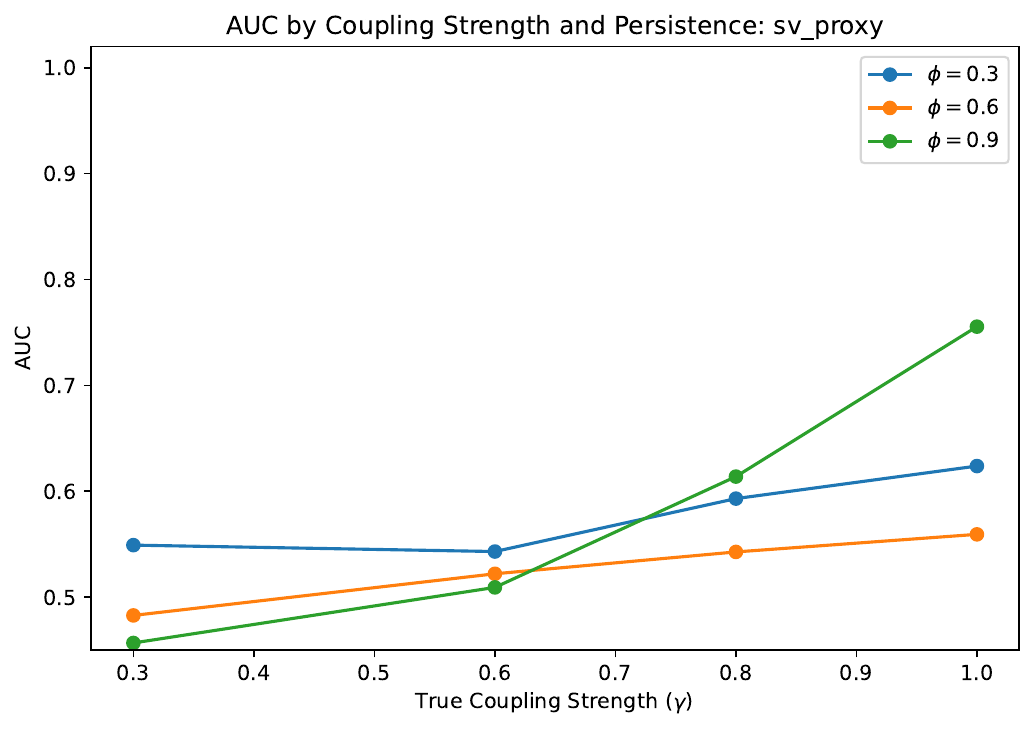}
    \caption{AUC of the SV proxy across coupling strengths $\gamma$ and persistence levels $\phi$. Detection performance improves as coupling strength increases but remains below that of models incorporating explicit state information. These results suggest that generic volatility models capture only limited information about variance structures that depend on latent-state position.}
    \label{fig:svproxy}
\end{figure}
To evaluate whether state-coupled variance could be detected using existing volatility modeling approaches, we compared the proposed framework against three alternative methods: Constant LDS, a GARCH-based proxy, and a SV proxy. Because these models do not estimate an explicit state-coupling parameter, comparisons focused on discrimination performance as measured by AUC.

The Constant LDS exhibited surprisingly strong detection performance despite lacking an explicit coupling mechanism. The AUC values generally exceeded $0.85$ and approached $1.0$ for larger coupling strengths. This behavior is expected because strong state-coupled variability induces deviations from the assumptions of a constant-variance model, allowing misspecification itself to act as an indirect indicator of coupling (Figure~\ref{fig:conlds}). However, the Constant LDS provides no estimate of coupling magnitude and therefore cannot be used for recovery of the underlying state-dependent relationship.

The GARCH proxy performed substantially worse than both the proposed framework and the heteroskedastic proxy. In most simulation settings, AUC values remained close to chance levels, typically ranging between $0.45$ and $0.55$ (Figure~\ref{fig:garchauc}). These results suggest that classical volatility clustering models are poorly suited for detecting the specific form of state-dependent variance considered here, where variability depends on latent-state position rather than purely on recent volatility history.

The stochastic volatility proxy exhibited modestly better performance than the GARCH approach but remained substantially below the proposed framework. Detection performance generally improved as coupling strength increased, with AUC values reaching approximately $0.60--0.70$ in the strongest coupling regimes (Figure~\ref{fig:svproxy}). Nevertheless, performance remained considerably lower than both the proposed state-coupled model and the heteroskedastic proxy.

These results indicate that conventional volatility models capture only limited information about state-dependent variance structure. While model misspecification in the Constant LDS can serve as a useful detector of non-constant variance, neither GARCH nor stochastic volatility approaches appear well aligned with the underlying mechanism generating the simulated data. The strongest overall performance was achieved by methods that explicitly incorporate state information, with the proposed framework providing the additional advantage of direct coupling recovery.
\section*{Discussion}

The present study introduced a state-coupled stochastic volatility framework for modeling latent state-dependent variability in partially observed dynamical systems and evaluated its performance across a broad range of simulated conditions. Several consistent patterns emerged. Most notably, the proposed framework substantially reduced recovery bias relative to observed-state heteroskedastic approaches, particularly when state-volatility coupling was strong. Although this improvement was often accompanied by increased estimation variability, the reduction in systematic error remained remarkably consistent across simulation settings, suggesting that explicit latent-state reconstruction provides important information that is unavailable when variability is modeled solely from observed measurements.

A central finding was the persistent attenuation bias exhibited by the observed-state heteroskedastic proxy. As observation noise increased, estimates of the coupling parameter became increasingly biased toward zero. This behavior is expected because measurement noise obscures the true relationship between latent-state position and process variability. When variability is modeled directly from noisy observations, the resulting estimates conflate measurement error with the underlying variance structure, producing systematic underestimation of coupling strength. By reconstructing latent trajectories before estimating the variance mapping, the proposed framework partially separates these sources of uncertainty and therefore produces estimates that remain substantially closer to the true coupling parameter.

An important pattern across the simulation study was a tradeoff between systematic error and estimation variability. The proposed framework consistently reduced attenuation bias, often by a substantial margin, but these improvements did not always translate into lower RMSE. In several low-noise conditions, the observed-state heteroskedastic proxy achieved lower RMSE despite exhibiting greater bias, reflecting the fact that direct estimation from observations can produce more stable estimates when measurements already provide a close approximation of the latent process. Conversely, the proposed framework incurred additional uncertainty associated with latent-state reconstruction while producing estimates that remained substantially closer to the true coupling parameter. This pattern suggests that the principal advantage of the state-coupled framework is improved recovery fidelity rather than uniformly lower estimation error. As latent persistence increased and additional temporal information became available, however, the variance cost of latent-state reconstruction diminished, allowing the bias reduction to translate more directly into improvements in overall recovery performance.

Recovery performance was also strongly influenced by the temporal structure of the latent process. Across multiple simulation regimes, increasing persistence improved both bias and RMSE for all methods, with particularly large gains for the proposed framework. This pattern suggests that temporal persistence provides additional information that facilitates latent-state reconstruction. When latent dynamics evolve gradually, information accumulates across observations, allowing the filtering and smoothing procedures to recover the underlying trajectory more accurately. In contrast, weak persistence reduces the amount of information available about latent-state evolution, making recovery of state-dependent variance considerably more difficult. The particularly strong performance observed under high-persistence conditions indicates that the framework is most effective when latent states remain informative over extended periods of time.

The benchmark comparisons further clarify the nature of the information captured by the proposed model. The Constant LDS achieved surprisingly strong detection performance despite lacking any explicit state-dependent variance mechanism. This result suggests that sufficiently strong coupling leaves detectable signatures in the dynamics themselves, allowing model misspecification to serve as an indirect indicator of latent variance structure. However, because the Constant LDS cannot estimate a coupling parameter, it provides no information about the magnitude or form of the underlying relationship. Similarly, the weak performance of the GARCH and stochastic volatility benchmarks indicates that state-dependent variability differs fundamentally from generic forms of time-varying volatility. Variance processes driven by residual history or independent latent volatility trajectories capture only limited aspects of the phenomenon considered here. The strongest performance was consistently observed among models that incorporated information about system state, emphasizing the importance of state-dependent representations of variability.

The interaction between observation noise and latent-state reconstruction was particularly informative. Under low-noise conditions, the observed-state heteroskedastic proxy often achieved lower RMSE despite exhibiting larger bias. In these settings, observations already provide a relatively accurate representation of the latent process, reducing the need for explicit latent-state inference. As observation noise increased, however, the advantages of the proposed framework became increasingly apparent. Detection performance improved relative to competing approaches, attenuation bias was substantially reduced, and overall recovery accuracy became increasingly competitive. These findings suggest that latent-state reconstruction is most valuable precisely in the settings for which it was designed: systems in which observations provide only an imperfect representation of the underlying dynamics.

Importantly, the simulation regimes in which the proposed framework performed best are not unusual corner cases. Many physiological, behavioral, and neural systems are characterized by substantial measurement noise and persistent latent dynamics. Examples include neural population activity, electrophysiological recordings, autonomic regulation, motor adaptation, and treatment-response trajectories. Consequently, the conditions under which the framework demonstrated the greatest advantages correspond closely to situations frequently encountered in empirical research. This observation suggests that explicitly modeling state-dependent variability may provide useful information about system dynamics beyond that available from mean-state trajectories alone.

Several limitations should be acknowledged. The present study relied entirely on simulated data generated under the assumed model structure. Although the simulations were designed to span a broad range of conditions, the results demonstrate methodological feasibility rather than empirical validity. The findings therefore should not be interpreted as evidence that state-coupled volatility exists in any specific biological, behavioral, or physiological system. Real-world datasets may contain additional complexities, including model misspecification, nonstationarity, measurement artifacts, and nonlinear dynamics that were not considered here. Future work should evaluate the framework using empirical datasets, examine robustness under model misspecification, and investigate more flexible variance mappings capable of representing richer forms of state-dependent variability.

Overall, the results demonstrate that state-coupled volatility can be recovered under partial observation when latent-state structure is explicitly modeled. Across a wide range of simulation conditions, the proposed framework consistently reduced attenuation bias and frequently improved detection performance, particularly in noisy environments characterized by persistent latent dynamics. These findings establish a methodological foundation for investigating whether structured variability contributes meaningful information about system dynamics beyond that contained in mean-state trajectories alone.

\section*{Conclusion}

This work introduced a state-coupled stochastic volatility framework for studying latent state-dependent variability in partially observed dynamical systems. The proposed model extends conventional latent state-space formulations by allowing process variance to vary as a function of displacement from a latent equilibrium, providing an interpretable representation of structured stochasticity. To estimate this relationship under partial observation, we developed a particle expectation-maximization procedure combining bootstrap particle filtering and backward trajectory smoothing.

Across a large simulation benchmark, the proposed framework consistently reduced recovery bias relative to observed-state heteroskedastic alternatives and demonstrated its greatest advantages under conditions of substantial observation noise and persistent latent dynamics. Recovery accuracy improved with increasing temporal persistence, while detection performance remained competitive across a broad range of simulation settings and was often superior in noisy observational regimes. These findings suggest that latent-state reconstruction can preserve information about state-dependent variability that may be difficult to recover from observed measurements alone.

Taken together, the results demonstrate that state-coupled volatility can be identified and estimated under partial observation when the latent-state structure is explicitly modeled. More generally, the framework provides a methodological foundation for investigating whether structured variability contributes useful information about system dynamics beyond that contained in mean-state trajectories alone. Because the present study is based entirely on simulated data, the findings should be interpreted as evidence of methodological feasibility rather than validation of state-coupled volatility in empirical systems. Future work should evaluate the framework on real-world datasets, assess robustness under model misspecification, and extend the approach to more flexible variance mappings and nonlinear latent dynamics.
\section*{Acknowledgments}

The author acknowledges the University of California San Diego for providing an academic environment that supported the development of this work. The views expressed in this manuscript are solely those of the author.
\section*{Code Availability}

To support reproducibility and future methodological development, the proposed state-coupled stochastic volatility framework has been implemented as an open-source Python package, \texttt{varaware}. The package includes implementations for latent-state simulation, particle filtering, backward trajectory smoothing, particle expectation-maximization estimation, and visualization of model performance and operating characteristics.

The source code is publicly available through GitHub:

\begin{center}
\texttt{https://github.com/imabec/varaware}
\end{center}

The repository is intended to facilitate application, evaluation, extension, and benchmarking of the proposed framework in other latent dynamical system settings.

The simulation scripts used to generate the results reported in this paper are not currently included in the public repository but are available from the author upon request.

\bibliographystyle{plain}
\bibliography{ref}
\end{document}